\documentclass{article}

    \usepackage[preprint,nonatbib]{neurips_2023}



\usepackage[utf8]{inputenc} 
\usepackage[T1]{fontenc}    
\usepackage{hyperref}       
\usepackage{url}            
\usepackage{booktabs}       
\usepackage{amsfonts}       
\usepackage{nicefrac}       
\usepackage{microtype}      
\usepackage{xcolor}         

\usepackage{amsmath}
\usepackage{amsthm}
\usepackage{amssymb}

\usepackage{lipsum}
\usepackage{multirow}
\usepackage{stmaryrd}

\usepackage{multirow}
\usepackage{bbm}
\usepackage{graphicx}
\usepackage{amssymb}
\usepackage{adjustbox}
\usepackage{xspace}
\usepackage{algpseudocode}
\usepackage{algorithm}
\usepackage{graphicx}
\usepackage{caption}
\usepackage{subcaption}
\usepackage{color, colortbl}
\usepackage{tablefootnote}
\usepackage{enumitem}  
\usepackage{lipsum}
\usepackage{xspace}

\usepackage{makecell}
\usepackage{algpseudocode}
\usepackage{footnote}
\makesavenoteenv{tabular}
\usepackage{wrapfig}   

\usepackage{pifont}

\setlist{topsep=1pt,itemsep=1pt,partopsep=1pt, parsep=1pt}

\usepackage[most]{tcolorbox} 


\definecolor{bluex}{rgb}{0.27, 0.42, 0.81}
\definecolor{purplex}{HTML}{9564bf}
\definecolor{red3}{HTML}{C52A20}
\definecolor{red2}{HTML}{B36A6F}
\definecolor{red1}{HTML}{FFb5b5}
\definecolor{purple}{HTML}{B36A6F}
\definecolor{darkyellow}{HTML}{D5BA82}
\definecolor{blue1}{HTML}{508AB2}
\definecolor{blue2}{HTML}{C4E4E3}
\definecolor{green1}{HTML}{A1D0C7}
\definecolor{green2}{HTML}{BFF6BA}
\definecolor{green3}{HTML}{028100}
\definecolor{teal}{HTML}{508AB2}
\definecolor{purple1}{HTML}{8d3a94}


\newtcbtheorem[]{gpt4answer}{GPT-4 Answer}%
{colback=blue2!5,colframe=blue1!80,fonttitle=\bfseries, left=.02in, right=.02in,bottom=.02in, top=.02in,breakable}{gpt4answer}

\newtcbtheorem[]{userinput}{User Input}%
{colback=green2!5,colframe=green3!80,fonttitle=\bfseries, left=.02in, right=.02in,bottom=.02in, top=.02in,breakable}{userinput}

\title{Enhancing Q-Learning with \\ Large Language Model Heuristics}
\author{%
  Xiefeng Wu \\
  School of Computer Science \\
  Wuhan University \\
  \texttt{wuxiefeng@whu.edu.cn} \\
}

\begin{document}

\newcommand{\yes}{\checkmark}
\newcommand{\hval}{\mathbf{h}}
\newcommand{\mdp}{\mathcal{M}}
\newcommand{\sspace}{\mathcal{S}}
\newcommand{\aspace}{\mathcal{A}}
\newcommand{\saspace}{\mathcal{Z}}
\newcommand{\dataspace}{\mathcal{D}}\newcommand{\pispace}{\Pi}
\newcommand{\rawrew}{\mathcal{R}}
\newcommand{\rawerew}{\mathbf{r}}
\newcommand{\rawtrns}{P}
\newcommand{\rew}{\rawrew}
\newcommand{\erew}{\rawerew}
\newcommand{\trns}{\rawtrns}
\newcommand{\rewm}{\rawrew_{\mdp}}
\newcommand{\erewm}{\rawerew_{\mdp}}
\newcommand{\trnsm}{\rawtrns_{\mdp}}
\newcommand{\val}{\mathbf{v}}
\newcommand{\qval}{\mathbf{q}}
\newcommand{\vald}{\val_D}
\newcommand{\lowval}{\val}
\newcommand{\lowqval}{\qval}
\newcommand{\highval}{\widehat{\val}}
\newcommand{\optplow}{\widecheck{\pi}^*}
\newcommand{\optplowd}{\optplow_D}
\newcommand{\disctrns}{\Lambda}
\newcommand{\disctrnsd}{\Lambda_D}
\newcommand{\uncraw}{\mathbf{u}}
\newcommand{\uncsa}{\uncraw_{D,\delta}}
\newcommand{\unc}{\uncraw_{D,\delta}^{\pi}}
\newcommand{\uncp}{\uncraw_{D,\delta}^{\pi'}}

\newcommand{\countd}{\dot{\mathbf{n}}_D}
\newcommand{\ircountd}{\countd^{-\frac12}}
\newcommand{\countdsa}{\ddot{\mathbf{n}}_D}
\newcommand{\ircountdsa}{\countdsa^{-\frac12}}

\newcommand{\lowvalopt}{\lowval^*}
\newcommand{\lowvaloptd}{\lowvalopt_D}
\newcommand{\highvalopt}{\highval^*}
\newcommand{\highvaloptd}{\highvalopt_D}

\newcommand{\Id}{I}
\newcommand{\rewpi}{r^{\pi}}
\newcommand{\trnspi}{\trns^{\pi}}
\newcommand{\act}{A}
\newcommand{\actpi}{\act^\pi}
\newcommand{\actmu}{\act^\mu}

\newcommand{\actpip}{\act^{\pi'}}
\newcommand{\valpi}{\val^{\pi}}
\newcommand{\qvalpi}{\qval^{\pi}}
\newcommand{\bell}{\mathcal{B}}
\newcommand{\bellpi}{\bell^{\pi}}
\newcommand{\trnspid}{\trnspi_D}
\newcommand{\bellpid}{\bellpi_D}
\newcommand{\bellpidup}{\overline{\bellpi_D}}
\newcommand{\bellpidlow}{\underline{\bellpi_D}}
\newcommand{\bellpiund}{\bellpi_{D,\unkval}}

\newcommand{\subopt}{\textsc{SubOpt}}

\newcommand{\valset}{\mathbb{V}}
\newcommand{\valsetpi}{\mathbb{V}^{\pi}}
\newcommand{\valpid}{\val^{\pi}_D}
\newcommand{\qvalpid}{\qval^{\pi}_D}
\newcommand{\valsetpid}{\mathbb{V}^{\pi}_D}
\newcommand{\rvalpid}{\bar\val^{\pi}_D}
\newcommand{\lowvalpi}{\lowval^\pi}
\newcommand{\lowqvalpi}{\lowqval^\pi}
\newcommand{\highvalpi}{\highval^\pi}
\newcommand{\lowvald}{\lowval_D}
\newcommand{\lowvalpid}{\lowval^\pi_D}
\newcommand{\lowqvalpid}{\lowqval^\pi_D}
\newcommand{\highvalpid}{\highvalpi_D}
\newcommand{\disctrnspi}{\disctrns^\pi}
\newcommand{\disctrnspid}{\disctrnsd^\pi}
\newcommand{\fatvalsetpid}{\mathbb{\widebar V}^{\pi}_D}
\newcommand{\fatvalsetoptd}{\mathbb{\widebar V}^{*}_D}

\newcommand{\optp}{\pi^{*}}
\newcommand{\valopt}{\val^{*}}
\newcommand{\bellopt}{T^{*}}
\newcommand{\trnsoptd}{\trnsopt_D}
\newcommand{\belloptdup}{\overline{\bellopt_D}}
\newcommand{\belloptdlow}{\underline{\bellopt_D}}
\newcommand{\belloptund}{\bellopt_{D,\unkval}}
\newcommand{\valoptd}{\val^{\optpd}_D}
\newcommand{\valsetoptd}{\mathbb{V}^*_D}
\newcommand{\optpd}{\pi_D^{*}}
\newcommand{\lowoptpd}{\underline{\pi}_D^{*}}

\newcommand{\rewd}{\rawrew_D}
\newcommand{\erewd}{\rawerew_D}
\newcommand{\trnsd}{\rawtrns_D}
\newcommand{\real}{\mathbb{R}}
\newcommand{\dist}{{Dist}}
\newcommand{\onehalf}{\frac{1}{2}}
\newcommand{\zerov}{\dot{\mathbf{0}}}
\newcommand{\zerovsa}{\ddot{\mathbf{0}}}
\newcommand{\onev}{\dot{\mathbf{1}}}
\newcommand{\onevsa}{\ddot{\mathbf{1}}}

\newcommand{\err}{\text{err}}
\newcommand{\rewerr}{\err_{\rew}}
\newcommand{\trnserr}{\err_{\trns}}

\newcommand{\E}{\mathbb{E}}

\newcommand{\emp}{\hat{\pi}_{D}}
\newcommand{\tv}{\text{TV}}

\newtheorem{theorem}{Theorem}
\newtheorem{lemma}{Lemma}
\newtheorem{claim}{Claim}
\newtheorem{assumption}{Assumption}
\newtheorem{corollary}{Corollary}
\newtheorem{definition}{Definition}
\newtheorem{proposition}{Proposition}
\newtheorem{remark}{Remark}

\newcommand{\sa}{\langle s,a \rangle}
\newcommand{\smu}{\langle s,\mu(s) \rangle}

\newcommand{\vunc}{\boldsymbol{\mu}}
\newcommand{\vuncpid}{\vunc_{D,\delta}^{\pi}}
\newcommand{\vuncpipd}{\vunc_{D,\delta}^{\pi'}}
\newcommand{\vuncempd}{\vunc_{D,\delta}^{\emp}}

\newcommand{\norm}[1]{\left\lVert#1\right\rVert}
\newcommand{\argmax}{\arg \max}
\newcommand{\argmin}{\arg \min}
\newcommand{\y}{\mathbf{y}}
\newcommand{\x}{\mathbf{x}}
\newcommand{\f}{\mathbf{f}}
\newcommand{\bvec}{\mathbf{b}}
\newcommand{\uvec}{\mathbf{u}}

\maketitle

\begin{abstract}

Q-learning excels in learning from feedback within sequential decision-making tasks but often requires extensive sampling to achieve significant improvements. While reward shaping can enhance learning efficiency, non-potential-based methods introduce biases that affect performance, and potential-based reward shaping, though unbiased, lacks the ability to provide heuristics for state-action pairs, limiting its effectiveness in complex environments. Large language models (LLMs) can achieve zero-shot learning for simpler tasks, but they suffer from low inference speeds and occasional hallucinations. To address these challenges, we propose \textbf{LLM-guided Q-learning}, a framework that leverages LLMs as heuristics to aid in learning the Q-function for reinforcement learning. Our theoretical analysis demonstrates that this approach adapts to hallucinations, improves sample efficiency, and avoids biasing final performance. Experimental results show that our algorithm is general, robust, and capable of preventing ineffective exploration.
\end{abstract}

\section{Introduction}
\label{sec:intro}

Q-learning with function approximation is an effective algorithm for addressing sequential decision-making problems, initially proposed by \cite{watkins1992q}. In many popular Actor-Critic algorithms (A2C) \cite{ddpg,sac,td3}, the Q-function is crucial for deriving policies, thereby implicitly guiding agent exploration. However, applying TD updates to learn the Q-function from the experience buffer presents several challenges, including potential divergence \cite{chen2023target_truncation} and high sampling demands \cite{sac,td3}.

Previous research has sought to enhance sample efficiency through reward shaping but failed in handling wrong heuristic.\cite{cheng2021heuristic} applies a decay factor in the reward heuristic \((1 - \lambda )\gamma \mathbb{E}_{s' \sim P}[h(s')])\) to reduce inaccuracies in guidance. \cite{Hu_Wang_Jia_Wang_Chen_Hao_Wu_Cheng_2019} applied \( z(s, a) \) to the reward heuristic \(z(s, a)f(s, a) \) to incorporate shaping rewards and adapt the shaping function \( f(s, a) \).
Both methods require additional learning steps to accommodate inaccuracies in the guidance.

Using LLMs/VLMs as agents represents a viable approach for planning and control. \cite{brooks2024large} employs an LLM as a world model, allowing it to directly output rewards and states. Similarly, \cite{wang2023voyager} utilizes an LLM to generate skill-based Python code, enabling autonomous exploration and skill expansion. Studies like \cite{shi2024yell}, \cite{liu2023interactive}, and \cite{ouyang2024long} use LLMs as high-level controllers to enhance robots' interaction abilities and scene understanding. However, all these methods suffer from slow inference speeds and hallucinations. Additionally, the low-level control skills require manual expansion.

Given the limitations of LLM/VLM agents and reward shaping, LLM-guided RL emerges as a promising research area.
\cite{zhao2024large} utilizes the LLM's action probability and a UCT term as heuristic components, specifically: $\hat{\pi}(a|h)\frac{\sqrt{N(h)}}{N(h,a)+1}$, to influence the Q value.
\cite{du2023guiding} expands the observation input using an LLM and solely relies on the LLM's environmental analysis as the reward function.
\cite{wang2024rl} implements automated preference labeling through a VLM. These studies, while innovative in designing LLM-based reward heuristics, introduce performance biases and are susceptible to hallucinations.

To tackle these challenges, we propose a novel LLM-guided RL framework called \textbf{LLM-guided Q-learning} that introduces a heuristic term, $\mathbf{\hval}$, to the Q-function, expressed as: $\mathbf{\hat{\qval}}= \mathbf{\qval} + \mathbf{\hval}$. By employing the LLM generated heuristic values to modulate the values of the Q-function, we implicitly induce the desired policy. We use Table \ref{tab:algorithm_features} to display the capabilities of LLM-guided Q-learning compared to other popular frameworks.

In our theoretical analysis, we demonstrate that the heuristic term can help reduce suboptimality and discuss the impact of hallucinations from the perspectives of overestimation and underestimation. We then prove that the reshaped Q function can eventually converge to $\qval^*_D$ and provide the sample complexity for the reshaped Q function's convergence to $\qval^*_D$.

Experimentally,We conducted two experiments to determine if LLM-guided Q-learning can improve sample efficiency and whether the heuristic term introduces bias to the agent's converged performance. Results indicate that our framework is general and robust, as it does not require tuning hyperparameters and can adapt to different task settings. Furthermore, it improves sample efficiency by preventing invalid exploration and accelerates convergence.

The main contributions of this paper are: 
\begin{enumerate}
    \item It combines the advantages of both reward shaping techniques and the LLM/VLM Agent framework to improve sample efficiency.
    \item It transforms the impact of inaccurate or hallucinatory guidance into the cost of exploration.
    \item It supports online correction and can interact with human feedback.
\end{enumerate}

\begin{table}[h]

\caption{Comparison of Frameworks by Features. It shows that our framework combines the advantages of both reward shaping and LLM agents without introducing bias.}

\label{tab:algorithm_features}
\centering
\footnotesize
\begin{tabular}{l|c|c|c|c}
\hline
 & \makecell{Potential Reward\\Heuristic} & \makecell{Non-Potential\\ Reward Heuristic } & \makecell{LLM guided\\Q Heuristic} & \makecell{LLM/VLM \\agent} \\ \hline
Improve sample efficiency& $\checkmark$ & $\checkmark$ & $\checkmark$ & $\checkmark$   \\
Unbiased Agent Training &$\checkmark$& - & $\checkmark$ &-\\
Inaccurate Heuristic Adaptation &- & -  &  $\checkmark$ & $\checkmark$ \\
Interactive Training Correction & -&- & $\checkmark$ & $\checkmark$ \\
Real-Time Inference &$\checkmark$ &$\checkmark$ & $\checkmark$ & -\\
No Hallucination &$\checkmark$&$\checkmark$&$\checkmark$& -  \\
Complex Task Support &-&$\checkmark$&$\checkmark$& - \\ 
\hline
\end{tabular}

\end{table}

\vspace{-4pt}

\section{Related Work}

\subsection{Reward Shaping}
Reward shaping, initially introduced by \cite{rewardshaping_ang_wu}, is a crucial technique in reinforcement learning (RL) that modifies reward functions to improve agent learning efficiency. 
This approach has been further refined to accommodate various learning contexts and challenges. For example, Inverse RL \cite{ziebart2008maximum,wulfmeier2015maximum,finn2016guided} and Preference RL\cite{christiano2017deep,ibarz2018reward,lee2021pebble,park2022surf}, or reward heuristic: unsupervised auxiliary tasks reward \cite{jaderberg2016reinforcement}; 
count-based reward heuristics \cite{bellemare2016unifying,ostrovski2017count}; 
Self-supervised prediction errors as reward heuristics \cite{pathak2017curiosity,stadie2015incentivizing,oudeyer2007intrinsic}. 
As explained in \ref{sec:intro}, non-potential-based reward shaping introduces biased performance, and potential-based reward shaping can not provide guidance on actions. Furthermore, the reward shaping technique suffers from inaccurate guidance.
\subsection{LLM\textbackslash VLM Agent}
With limited samples and constraints regulating responses, LLMs/VLMs can achieve few-shot or even zero-shot learning in various contexts, as demonstrated by works such as Voyager \cite{wang2023voyager}, ReAct \cite{yao2022react}, SwiftSage \cite{lin2024swiftsage}, and SuspiciousAgent \cite{guo2023suspicion}.

In the field of robotics, VIMA \cite{jiang2022vima} employs multimodal learning to enhance agents' comprehension capabilities. Additionally, the use of LLMs for high-level control is becoming a trend in control tasks \cite{shi2024yell,liu2023interactive,ouyang2024long}.

In web search, interactive agents \cite{gur2023real,shaw2024pixels,zhou2023webarena} can be constructed using LLMs/VLMs. Moreover, to mitigate the impact of hallucinations, additional frameworks have been developed. For instance, decision reconsideration \cite{yao2024tree,long2023large}, self-correction \cite{shinn2023reflexion,kim2024language}, and observation summarization \cite{sridhar2023hierarchical} have been proposed to address this issue.
\subsection{LLM-enhanced RL}
Relying on the understanding and generation capabilities of large models, LLM-enhanced RL has become a popular field\cite{du2023guiding,carta2023grounding}. 
Researchers have investigated the diverse roles of large models within reinforcement learning (RL) architectures, including their application in reward design \cite{kwon2023reward,wu2024read,carta2023grounding,chu2023accelerating,yu2023language,ma2023eureka}, 
information processing \cite{paischer2022history,paischer2024semantic,radford2021learning}, 
as a policy generator, 
and as a generator within large language models (LLMs)\cite{chen2021decision,micheli2022transformers,robine2023transformer,chen2022transdreamer}.
While existing work on LLM-assisted reward design reduces design complexity, it either introduces bias into the original Markov Decision Process (MDP) or fails to offer adequate guidance for complex tasks.

\section{LLM-guided Q-learning}

\begin{wrapfigure}{r}{0.5\textwidth}
    \centering
    \includegraphics[width=\linewidth,trim=0.5cm 16.5cm 6.5cm 1cm,clip]{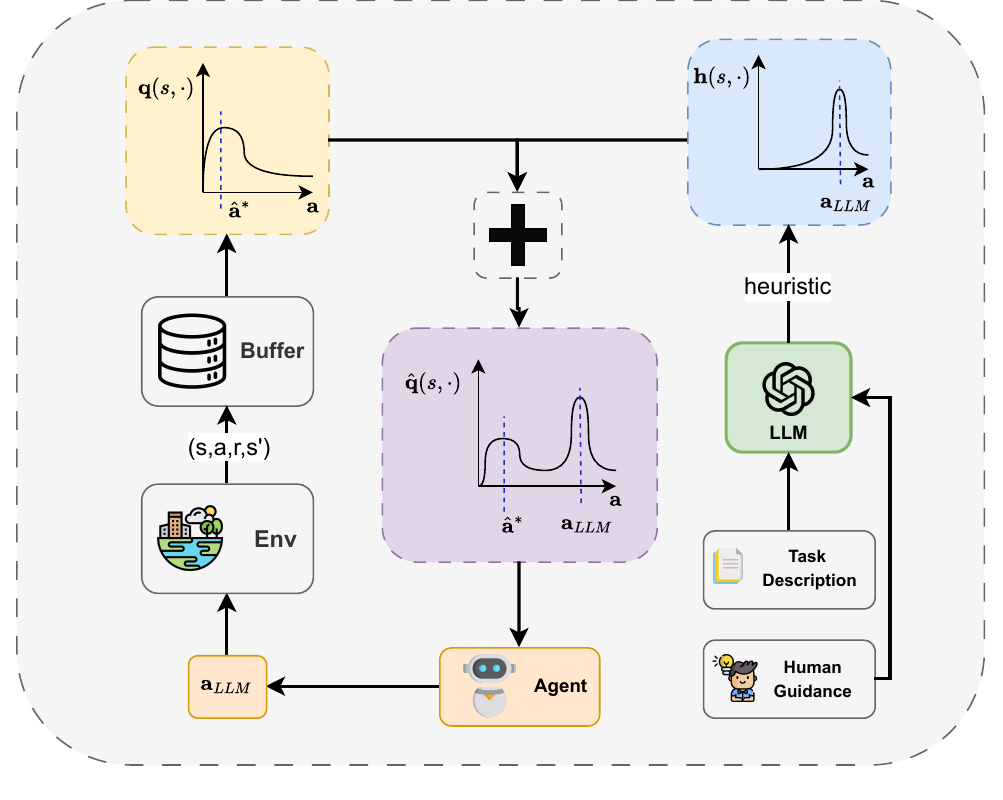}
    \caption{Framework of LLM-guided Q-learning. The update sources for Q-learning include both collected experience and LLM-generated heuristics. 
    }
    
    \label{fig:lql_framework}
    \vspace{-10pt}
\end{wrapfigure}
Non-potential-based reward designs can bias agent performance, while potential-based designs pose implementation challenges as they do not modify rewards at the action level or in terminal states. Given these challenges, we propose a novel training framework: LLM-guided Q-learning.
As shown in Figure \ref{fig:lql_framework}, the LLM can provide heuristic values at any iteration step to influence the Q function, which in turn affects the actions of the derived policy in subsequent iterations.
It represents a comprehensive training framework aimed at enhancing the sample efficiency of Q-learning algorithms. This approach can adapt to imprecise heuristics, recovering to original performance levels within a finite number of training steps after receiving incorrect guidance. Additionally, it supports online corrections and avoids biased performance, ensuring reliable functionality in dynamic learning environments.

In this section, we will discuss popular heuristic terms used in Q-learning and introduce novel heuristic terms generated by Large Language Models (LLMs). Additionally, we will introduce two practical implementations designed to effectively deploy this framework.
\subsection{Heuristic Q-learning Framework}
In the Q-learning framework, an experience buffer $D$ is used to store transitions from the Markov Decision Process (MDP), supporting both online and offline training. Our proposed framework utilizes both an experience buffer and heuristic values provided by a Large Language Model (LLM) to form a better Q-function. The general form is given by:
$$\hat{\qval}^{k+1}(s,a) = \qval^k(s,a) + \alpha\bigl(\lceil \overbrace{\erew(\sa) + \gamma\sum_{s' \in D(\sa)}\trnsd(s'|\sa)\max_a \qval^k(s,\cdot)}^{\text{target}} + 
\overbrace{\mathbf{h}(s,a)}^{\text{heuristic}} \rceil\bigr)$$
Here, $\mathbf{h}$ : $S \times A \xrightarrow{} \mathbb{R}$ is the heuristic term that adjusts the value of the target. The truncation operator $\lceil \cdot \rceil$  prevents potential Q-function divergence\cite{chen2023target_truncation} by ensuring that the input does not exceed $\frac{R_{\max}}{1-\gamma}$. The Q value is a high-level representation of both the environment and the policy of an agent. It encapsulates key elements such as rewards $r$, transition probabilities $P$, states $s$, actions $a$, and the policy $\pi$, thereby integrating the environmental dynamics and the policy under evaluation.
Changes in any of these components directly influence the Q values associated with different actions. 
Specifically, the term $\mathbf{h}$ can take many forms. Table \ref{tab:H_charts} lists some popular heuristic terms as well as new heuristic terms generated by Large Language Models (LLMs):
\begin{table}[h]
\caption{Heuristic Terms and Their Description.}
\label{tab:H_charts}
    \centering
    \begin{tabular}{|>{\raggedright\arraybackslash}m{0.25\textwidth}|>{\raggedright\arraybackslash}m{0.65\textwidth}|}
    
    \hline
    \textbf{Expression} & \textbf{Description} \\ \hline
    \( -\log \pi(a|s) \) \cite{sac} & Negative log-probability of action \(a\) given state \(s\), used in the Soft Actor-Critic algorithm. \\ \hline
    \( \sum_{t=t'}^{T} \gamma^t r_t \)\cite{geng2023improving,lin2018episodic} & Cumulative return of an episode, summed over rewards \( r_t \) at each time step \( t \), discounted by factor \( \gamma \). \\ \hline
    
    \(  c_{\text{uct}} \sqrt{\frac{\log N(s)}{N(s, a)}} \) \cite{kocsis2006bandit_mcts} & UCT heuristic for balancing exploration and exploitation in Monte Carlo Tree Search(MCTS)\cite{kocsis2006bandit_mcts,browne2012survey_mcts,coulom2006efficient_mcts}, where \( c_{\text{uct}} \) is a tunable parameter for exploration emphasis. \\ \hline 

    \(    c \sqrt{\frac{H^2\log(HSA/\delta)}{N_h(s,a) \vee 1}} \cite{rashidinejad2021bridging} \) &  penalty term in the VI-LCB algorithm for finite-horizon MDPs. \\ \hline
    \( \mathcal{R}(G(p(s,a))) \) \cite{kwon2023reward}\footnotemark & A generative model outputs the heuristic value directly, based on input prompt \( p \), which includes descriptions of state \( s \), and action \( a \). \\ \hline
    \( \mathcal{T}(G(p))(s,a) \)  & A generative model first generates executable code from prompt \( p \) that acts as a heuristic function, then evaluates it with inputs \( s \) and \( a \). \\ \hline
    \end{tabular}
    
\end{table}
\footnotetext{The expression is originally used for generating rewards, but can also serve as a Q-value heuristic.}

Integrating Q-functions with different heuristic functions leads to agents displaying distinct behavioral styles.
\paragraph{Limitations of Action-Bonus Heuristics}
Current popular heuristic Q-learning approaches, such as MCTS and SAC, 
utilize action-bonus heuristics like maximum entropy terms or count bonus terms to encourage exploration.
These methods are essentially neighbourhood search algorithms. 
For example, in SAC, denote $\hat{a}^*$ as the optimal action derived from the learned Q-function in state $s$, and $\hat{a}^*+ \epsilon$ is close to $\hat{a}^*$. 
We can infer that the value $\hat{\qval}(s,\hat{a^*}+\epsilon)$ is further increased by the entropy term because $\pi(\hat{a^*}+\epsilon | s ) < \pi(\hat{a^*} | s )$.
When the exported policy from the current learned Q function deviates significantly from the ground truth optimal policy, the agent wastes a tremendous number of samples exploring areas that stray from the optimal trajectory, leading to sample inefficiency. 
This limitation also explains why the same algorithm performs differently even in the same environment.

Recent studies\cite{wang2024rl,kwon2023reward,hu2023language,yu2023language,ma2023eureka} in LLM-enhanced reinforcement learning (RL) have used LLMs to generate rewards or reward functions, which can bias agent performance or slow down training due to the time required for generating reward signals on a step-by-step basis. 
By using LLMs to generate heuristic Q-values instead, we overcome issues associated with reward shaping. This approach enables action-level heuristic values and avoids biasing agent capabilities after convergence.

\subsection{Algorithm Implementation}
\label{sec:implementation}

In the implementation of LLM-guided Q-learning, we provide two kinds of algorithms: Offline-Guidance Q-Learning and Online-Guidance Q-Learning. The Offline-Guidance Q-Learning is a special case of Online-Guidance Q-Learning that only provides guidance at iteration step $k=0$.
We extend the TD3 algorithm to LLM-TD3, which can accept heuristic values from LLM. To learn from the experience buffer, we employ an L2 loss to approximate the buffer's Q-values, defined as:

\begin{equation}
    \label{eq:qhatloss}
    L_{major}(\theta) = E_{(s,a,r,s') \sim D}[(\lceil r+\gamma \min_i \hat{\qval}_{\theta_i}(s',\hat{a}+\epsilon) \rceil-   \hat{\qval}_\theta(s,a) )^2]
\end{equation}

In Equation \ref{eq:qhatloss}, we apply the truncation operator $\lceil \cdot \rceil$ to the target Q value to prevent potential divergence examples while ensuring that it is consistent with our theoretical analysis assumptions. The pseudo-code of offline guidance Q learning is displayed at Algorithm \ref{alg:offline_q_learning}: 

\begin{algorithm}
\caption{Offline-Guidance Q-Learning Algorithm}
\label{alg:offline_q_learning}
\begin{algorithmic}[1]
\State \textbf{Inputs:} Local MDP $D$, Large Language Model $G$,prompt $p$,
\State \textbf{Initialization:} Initialize the heuristic Q buffer: $D(G(p))$,initialize actor-critic $(\mu_\phi,Q_{\theta_1},Q_{\theta_2})$ and target actor-critic $(\mu_{\phi^{'}},Q_{\theta^{'}_1},Q_{\theta^{'}_2})$ 
\State \textbf{Generate Q-buffer:} $D_g \leftarrow \{ (s_i, a_i,Q_i) \mid (s_i, a_i,Q_i) \in D(G(p)), \, i = 1, 2, \ldots, n \}$
\State \textbf{Q-Bootstrapping:} $\theta =\theta -\alpha \nabla_\theta L_{bootstrap}$ using  eq \ref{eq:bootstrap_loss}
    \For{iteration $t' \in T=1,2,3...$}
        \State Sample $(s,a,r,s')$ from Env
        \State $D \leftarrow D \cup (s,a,r,s')$        
        \State Sample N transitions $(s,a,r,s')$ from $D$
        \State $\hat{a} \leftarrow \mu(s') + \epsilon, \epsilon \sim \text{clip}(\mathcal{N}(0,\sigma),-c,c)$
        \State $y(s') \leftarrow \lceil r+\gamma \min_{i=1,2}\qval_{\theta_i}(s',\hat{a})  \rceil $
        \State Update critics $\theta_i  \leftarrow \arg \min_{\theta_i}  L_{major}(\theta_i)$
        
        \If{ $t' \mod d$}
            \State Update actor $\phi \leftarrow \arg \min_\phi L_{actor}(\phi)$ 
            \State Update target networks:
            \State $\theta_{i}^{'} \leftarrow \tau\theta_i + (1-\tau)\theta_{i}^{'}$
            \State $\phi^{'} \leftarrow \tau\phi + (1-\tau)\phi^{'} $
        \EndIf
        \EndFor
\State \textbf{end while}
\end{algorithmic}
\end{algorithm}

As shown in Algorithm \ref{alg:offline_q_learning}, two additional step is added to the TD3 algorithm: Q-buffer Generation and Q bootstrapping. For Q-buffer Generation, the input $p$ is the combination of task description, and predefined prompt. 
Task description contains  specific descriptions of state-actions and provides conditions for trajectory termination. 
Treating text descriptions as input to the large model can help it better generate Q values. 
In order to constrain the model's answers, we developed a general template, which can be combined with the description of any task to form the input of the large model. Its main task is to guide the LLM to generate an executable python function, which will return two set of $(s,a,Q)$ pairs.

In the step "Generate Q-Buffer," the general template and task description serve as input to an LLM, which produces executable Python code. This code returns a set of (s, a, Q) pairs, collectively referred to as 
$D_g$, and is defined as follows:
$$D_g := \{ (s_i, a_i,Q_i) \mid (s_i, a_i,Q_i) \in D(G(p)), \, i = 1, 2, \ldots, n \}$$
Note that we use $D(\cdot)$ to denote the transformation of LLM-generated text into a set of $(s,a,Q)$ pairs.
In the step "Q-Bootstrapping", we employ the following loss function to bootstrap $\hat{\qval}$:
\begin{equation}
\label{eq:bootstrap_loss}
     L_{bootstrap}(\theta) = E_{(s_i,a_i,Q_i) \sim D_g}{(Q_i - \hat{\qval}_{\theta}(s_i,a_i))^2}
\end{equation}

Since GPT-4's ability to predict Q-values decreases as the complexity of the environment increases, and Offline Guidance cannot predict in advance all potential problems encountered by the agent during the training process, we propose Online-Guidance Q-Learning, a method that enables the agent to receive guidance at any training step from human feedback.
The corresponding loss function for Online-Guidance Q-learning is presented in Equation \ref{eq:online_hq}, and the pseudo-code for Online-Guidance is shown in Appendix \ref{sec:online_guidance}:
\begin{equation}
    \label{eq:online_hq}
    L_{online}(\theta) =  L_{major}(\theta) + E_{(s_i,a_i,Q_i) \sim D(G(p))}[(\hat{\qval}_{\theta}(s_i,a_i) - Q_i)^2]
\end{equation}

The Online-Guidance algorithm detects whether there is guidance from external sources at each training step. When the agent engages in ineffective exploration, humans provide guidance information, which the language model then transforms into $(s,a,Q)$ pairs. The algorithm then uses the loss function in Equation \ref{eq:online_hq} to train the Q-function. A visualization for Online-Guidance can be seen in Figure \ref{fig:online_lql}.
\begin{figure}[h]
    \centering
    \includegraphics[angle=90,trim=6.5cm 0cm 5.5cm 2.3cm,clip, scale=0.55]{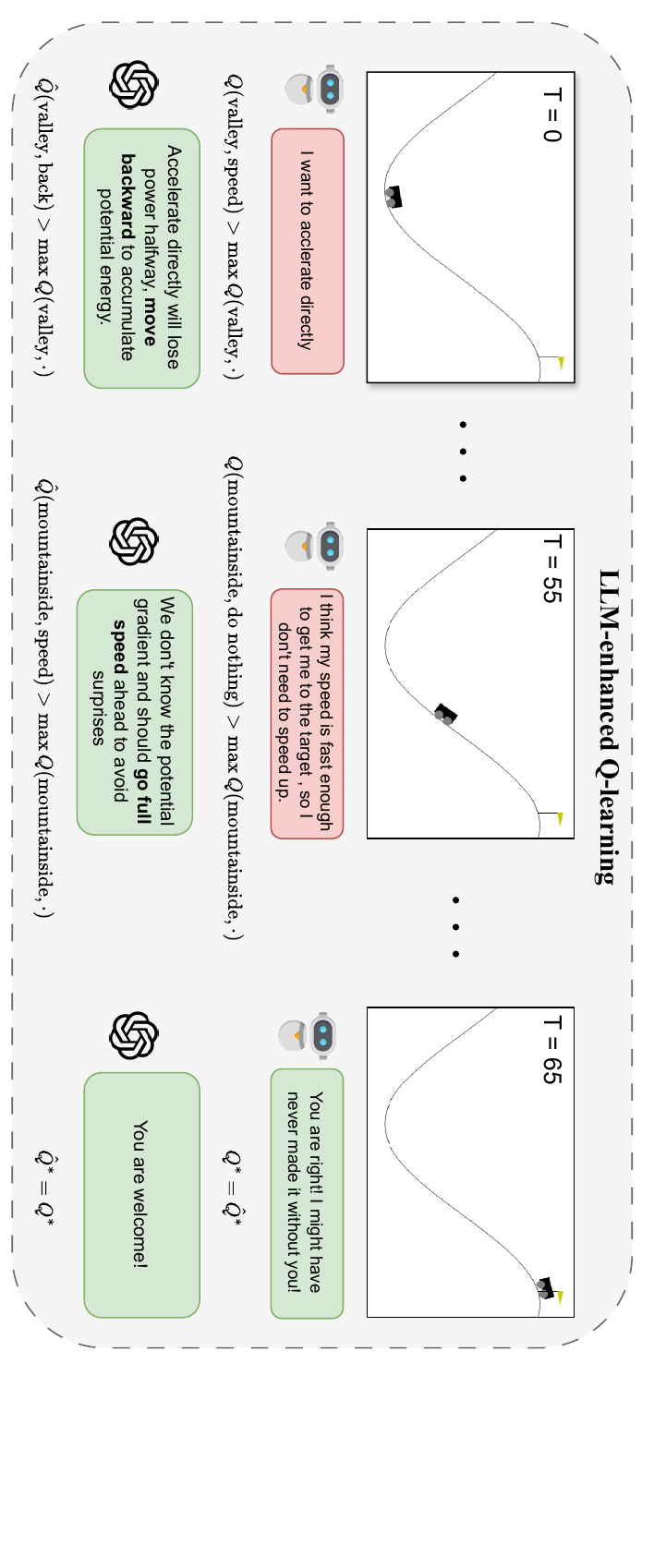}
    \caption{Overview of the Online Guidance Q-learning framework. External guidance can be provided at any training step, offering heuristic Q-values that influence policy decisions and improve sample efficiency.}
    \vspace{-5pt}
    \label{fig:online_lql}
\end{figure}

\section{Theoretical Analysis}

In this section, we provide suboptimality analysis at the \( s, a \) level, as well as an analysis of how hallucination affects Q-values. This elucidates how our framework accelerates algorithm convergence. 
We also present a sampling complexity analysis of the convergence of an arbitrary bounded, reshaped Q-function to the optimal Q-function of the collected MDPs.
\subsection{Suboptimality Analysis}
\label{sec:suboptimality_analysis}
By defining $a^* = \arg\max_a \qval^*(s,a)$, $a^{\pi_D^*} = \arg\max_a \qval^*_D(s,a)$, and $a^\pi = \max_a \pi(a|s)$ for any random policy $\pi$, we have:

\begin{align*}
\qval^*(s,a^*)-(\qval^{\pi_D^*})^*(s,a^{\pi_D^*}) \le
\underbrace{
\inf_{\pi}\bigl( 
\overbrace{
\qval^{*}(s,a^{\pi^*}) -\hat{\qval}_D^{*}(s,a^{\pi^*}) 
}^{\textsc{(a1)}}+ 
\overbrace{
\hat{\qval}_D^*(s,a^{\pi^*})-(\hat{\qval}_D^{\pi})^*(s,a^{\pi})\bigr)}^{\textsc{(a2)}}
}_{\textsc{(a)}}+ \\
\underbrace{ \sup_\pi \bigl((\hat{\qval}_D^\pi)^*(s,a^\pi)-(\qval^\pi)^*(s,a^\pi)\bigr)}_{\textsc{(b)}}
\end{align*}

\begin{proof}
See proof at \ref{sec:proof_suboptimality}
\end{proof}

The first segment, Item $\textsc{(a)}$, evaluates the divergence between the Q-function derived from the sampled MDP $D$ and the optimal Q-function. 
For any state $s$, sub-item (A1) assesses overestimation by using the optimal action $a^*$ to measure the difference between the Q-value of the optimal action in $D$ and the Q-value of the actual optimal action.
Sub-item (A2) analyzes the difference in Q-values under different policies.
Item $\textsc{(a)}$ assesses whether the buffer MDP $D$ effectively captures optimal trajectories. If so, minimizing Item $\textsc{(a)}$ is viable as it confirms that the optimal action in the buffer MDP aligns with the true optimal action.
Item $\textsc{(b)}$ measures the differences in Q-values under different policies. If $D$ does not contain the optimal trajectory, this gap will persist throughout training, and thus the heuristic term is needed to bridge this gap and guide the policy toward sampling the optimal trajectory.

\subsection{Impact of Hallucination}
\label{sec:impact_hallucination}
The hallucination problem in large language models (LLMs) manifests in two primary ways: the generation of verbose responses that fail to meet specified requirements and the production of inaccurate Q-values. These imprecise Q-values can be further categorized into overestimation and underestimation.

Overestimation has been clearly discussed in TD3\cite{td3}. The main point of overestimation is that the error $\epsilon$ on the target will influence the TD update and thus impact all the Q values of previous state-action pairs. 

The impact of underestimation bias can be divided into two situations: the underestimation of non-optimal actions and the underestimation of optimal actions.
We define $a^* := \arg \max_a \qval^*(s,a)$, and $\hat{a}^* = \arg \max_a \hat{\qval}(s,a)$, and have:

\paragraph{Underestimation on Non-Optimal Actions}
Assume that for some non-optimal action $a$,
$\hat{\qval}(s,a) < \qval^*(s,a) \le \qval^*(s,a^*)$. This means that the learned Q function underestimates non-optimal action $a$. 
According to the definition of the greedy sampling policy $\mu$, this underestimation setting will not impact the policy, and only $\hat{a}^*$ will.
\paragraph{Underestimation on Optimal Actions}

We can separate underestimation on optimal actions into two cases. Case 1: $\hat{a}^* = a^*$, and Case 2 : $\hat{a}^* \ne a^* $.
Both cases satisfies: $\hat{\qval}(s,\hat{a}^*) < \qval^*(s,a^*)$

Case 1:
The derived policy will not be influenced by the underestimation, because the Q function and the policy function have a bijective relations. In Case 1, the relationship between $\qval^*$ and $\hat{\qval}$ can be expressed as $\qval^*(s,a^*) > \hat{\qval}(s,a^*) > \mathrm{secmax}_a \hat{\qval}(s,a)$ and the exported policy is the same , which won't influence the performance of the agent. This explains why CQL \cite{kumar2020conservative_cql} is still successful even with pessimistic Q-values.

Case 2:
With the assumption  $\hat{a}^* \ne a^* $, we can derive that :$\hat{\qval}(s,a^*) < \max_a \hat{\qval}(s,a) < \qval^*(s,a^*)$
. This kind of underestimation will impact the performance of the agent since the optimal action of the learned Q function isn't the ground truth optimal action.
To solve this problem, we either improve the value of $a^*$ or decrease the value of other non-optimal actions.
However, decreasing the value of all other non-optimal actions is impossible ,as it's computationally complex. Another feasible method to make $a^*$ optimal in the learned Q function is to improve by sampling a high discounted return trajectory that includes $a^*$, or use a heuristic term to force $a^*$ to be optimal in $\hat{\qval}(s,\cdot)$.

The analysis results show that underestimation of non-optimal action will not impact the agent's performance and can help avoid ineffective exploration. There is no strict limitation on the Q-value estimation of the optimal action, thus inaccurate guidance will not impact the performance of the agent according to Case 1. When $D$ includes the optimal trajectory , $\hat{\qval}$ will converge to $\qval^*$ very soon.

\subsection{Convergence Analysis}

\begin{theorem}[Contraction and Equivalence of $\hat{\qval}$]
\label{theorem:Q_contraction_Q_equal}
Let $\hat{\qval}$ be a contraction mapping defined in the metrics space $(\mathcal{X},\|\cdot\|_{\infty})$, i.e,
$$\|   \bell_{D}(\hat{\qval}) -\bell_{D}(\hat{\qval}')  \|_{\infty} \le \gamma \|\hat{\qval} - \hat{\qval}' \|_{\infty}  $$, where $\bell_{D}$ is the Bellman operator for the sampled MDP \(D\) and \(\gamma\) is the discount factor.

Since both $\hat{\qval}$ and $\qval$ are updated on the same MDP,
we have the following equation:
$$\hat{\qval}_D^{*} = \qval^{*}_D$$
\label{thm:qhat_is_contraction}
\end{theorem}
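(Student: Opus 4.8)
The plan is to show that the reshaped Bellman operator $\bell_D$ acting on $\hat{\qval}$ has a unique fixed point, and then argue that this fixed point must coincide with the ordinary optimal Q-function $\qval^*_D$ on the sampled MDP $D$. The backbone is the Banach fixed-point theorem: since the hypothesis explicitly grants that $\bell_D$ is a $\gamma$-contraction on $(\mathcal{X},\|\cdot\|_\infty)$ and this space is complete, there exists exactly one $\hat{\qval}_D^*$ with $\bell_D(\hat{\qval}_D^*)=\hat{\qval}_D^*$. The same reasoning applied to the standard Bellman optimality operator for $D$ gives a unique $\qval^*_D$. So the entire content reduces to identifying these two fixed points.

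First I would pin down precisely what "updated on the same MDP" means for the operator. The heuristic update rule from Section 3.1 is $\hat{\qval}^{k+1}(s,a) = \qval^k(s,a) + \alpha\lceil \text{target} + \mathbf{h}(s,a)\rceil$; I would argue that at convergence the incremental heuristic contribution is absorbed into the stationarity condition, i.e. the fixed-point equation for the reshaped iteration is the ordinary Bellman optimality equation on $D$ with the truncation operator $\lceil\cdot\rceil$ in force. The key observation is that $\mathbf{h}$ enters only as a transient additive nudge during iteration (a bootstrapping or online-guidance term), not as a permanent modification of the reward or transition kernel of $D$; the collected MDP $D$ — its empirical rewards $\erew_D$ and transitions $\trnsd$ — is identical for both the $\qval$ and $\hat{\qval}$ recursions. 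Therefore the stationary equation $\hat{\qval} = \bell_D(\hat{\qval})$ is literally the same equation as $\qval = \bell_D(\qval)$, and by uniqueness of the fixed point $\hat{\qval}_D^* = \qval^*_D$.

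The steps in order: (i) invoke completeness of $(\mathcal{X},\|\cdot\|_\infty)$ and the contraction hypothesis to apply Banach, yielding existence and uniqueness of $\hat{\qval}_D^*$; (ii) observe that the truncated Bellman optimality operator for $D$ is itself a $\gamma$-contraction (truncation onto the ball of radius $R_{\max}/(1-\gamma)$ is nonexpansive in $\|\cdot\|_\infty$, composed with the usual $\gamma$-contraction), so $\qval^*_D$ exists and is unique and in fact equals the untruncated optimal Q-function since the latter already lies inside that ball; (iii) show the two stationarity conditions coincide because $\mathbf{h}$ does not alter the limiting fixed-point equation — only the trajectory of iterates — so both $\hat{\qval}_D^*$ and $\qval^*_D$ solve $x = \bell_D(x)$; (iv) conclude equality by uniqueness.

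The main obstacle is step (iii): making rigorous the claim that the heuristic term $\mathbf{h}$ washes out in the limit. As literally written, the update $\hat{\qval}^{k+1} = \qval^k + \alpha(\lceil \text{target} + \mathbf{h}\rceil)$ would have $\mathbf{h}$ present in every iteration, so a naive reading gives a fixed point shifted by roughly $\mathbf{h}/(1-\gamma)$-type terms rather than $\qval^*_D$. The honest version of the argument must lean on the implementation in Section 3.2, where heuristic guidance appears either only at $k=0$ (offline) or as an auxiliary loss term that is switched off once guidance stops (online): in both regimes, after guidance ceases the iteration is exactly the truncated TD update on $D$ with no $\mathbf{h}$, whose unique fixed point is $\qval^*_D$, and the contraction property guarantees the earlier heuristic-perturbed iterates are forgotten geometrically. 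I would therefore state the theorem's hypothesis as "$\bell_D$ denotes the (eventual) Bellman operator for $D$ once heuristic injection has terminated" and then the proof is the short Banach-uniqueness argument above; the subtlety to flag for the reader is exactly this interpretation of what operator $\hat{\qval}$ is ultimately iterated under.
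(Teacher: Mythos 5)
Your proposal is correct and reaches the same conclusion, but the second half takes a genuinely different route from the paper. Both arguments establish the $\gamma$-contraction property identically (the standard $\max$-over-actions bound), and both rest on the same unstated-but-essential premise that you, to your credit, flag explicitly: the heuristic $\mathbf{h}$ must eventually be switched off, so that the stationarity condition is the plain Bellman optimality equation on $D$ with no $\mathbf{h}$ term --- the paper handles this only with the remark that ``we stop providing heuristic values before convergence,'' whereas you isolate it as the crux and justify it via the offline/online implementation in Section~\ref{sec:implementation}. Where you diverge is the equivalence step: the paper proves $\hat{Q}^* = Q^*$ by backward induction over a stratification of states by distance to termination ($\Theta_H, \Theta_{H-1}, \dots, \Theta_1$), starting from $\hat{Q}^*(s,a) = r(s,a)$ at the penultimate layer and propagating backward; you instead invoke the Banach fixed-point theorem --- completeness of $(\mathcal{X},\|\cdot\|_\infty)$ plus the contraction hypothesis gives a \emph{unique} fixed point, and since $\hat{\qval}_D^*$ and $\qval_D^*$ satisfy the same equation $x = \bell_D(x)$, they coincide. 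Your route is shorter and strictly more general: the paper's layered induction implicitly assumes an episodic, effectively acyclic horizon structure (every state has a well-defined distance to a terminal set), which fails for MDPs with cycles or in the infinite-horizon discounted setting, while the uniqueness argument applies to any MDP on which $\bell_D$ contracts. Your additional observation that the truncation operator $\lceil\cdot\rceil$ is nonexpansive in $\|\cdot\|_\infty$ (so the truncated operator remains a $\gamma$-contraction and its fixed point agrees with the untruncated one inside the ball of radius $R_{\max}/(1-\gamma)$) is a detail the paper omits entirely, and it is worth keeping.
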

\begin{proof}
See Appendix \ref{sec:proof_qhat_contraction}
\end{proof}
Assume that LLM provides a relatively high heuristic Q value for state-action pair $\langle s',a'\rangle$ at training step k. and it satisfies :$ \max_{\tau} G_k(\tau) < \qval^k(s',a') + \hval^k(s',a')$. 
This inequality can also be expressed as:
$$\max_{s,a} \qval(s,a) < \qval^k(s',a') + \hval^k(s',a')$$
As the $\hat{\qval}$ is updated using next state's Q value, then we have the following inequality:

    $$\hat{\qval}^k(s',a') > \max_{s,a} \qval^k(s,a) \ge \max_a \qval^k(s',a)$$

This makes $a'$ the optimal action in $\hat{\qval}(s',a')$ at training step $k$. In online training process, $\mu(s')$ will directly execute $a'$ in the next following training step, allowing the agent to avoid exploring non-optimal trajectories.
In the limit, according to \cite{NIPS2010_091d584f,td3} and Theorem \ref{thm:qhat_is_contraction}, $\hat{\qval}$ converges to the optimal Q function. We now outline the theoretical upper limit on the sample complexity required for $\hat{\qval}$ to converge to $\qval^*_D$ for any MDP $D$:

\begin{theorem}[Convergence Sample Complexity]
\label{thm:sample complexity}
The sample complexity $n$ required for $\hat{\qval}$ to converge to the optimal fixed-point $\qval^*_D$ with probability $1-\delta$ is:
$$n > \mathcal{O}\left( \frac{|S|^2}{2\epsilon^2}\ln{\frac{2|S \times A|}{\delta}} \right)$$

\end{theorem}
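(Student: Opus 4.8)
The plan is to derive the sample complexity bound by reducing the question of $\hat{\qval}\to\qval^*_D$ convergence to the accuracy of the empirical model $\trnsd$ (and $\erewd$) as an estimate of the true model, then invoking the contraction property from Theorem \ref{thm:qhat_is_contraction} to translate model error into value error, and finally controlling the model-estimation error via a concentration inequality with a union bound over $S\times A$. Concretely, I would first observe that by Theorem \ref{thm:qhat_is_contraction} the fixed point $\hat{\qval}^*_D$ equals $\qval^*_D$, so it suffices to bound how far the fixed point of the empirical Bellman operator $\bell_D$ can be from the fixed point of the true Bellman operator $\bell$ — a standard simulation-lemma-style argument. The simulation lemma gives $\|\qval^*_D - \qval^*\|_\infty \le \frac{1}{1-\gamma}\bigl(\|\erewd - \erew\|_\infty + \gamma \frac{R_{\max}}{1-\gamma}\max_{s,a}\|\trnsd(\cdot|s,a) - \trns(\cdot|s,a)\|_1\bigr)$, so driving the right-hand side below $\epsilon$ reduces to making each per-$(s,a)$ transition estimate accurate in $\ell_1$.

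Next I would bound the per-state-action estimation error. For a fixed $(s,a)$ with $n$ samples, the empirical distribution $\trnsd(\cdot|s,a)$ over the $|S|$ next-states concentrates around $\trns(\cdot|s,a)$; by an $\ell_1$/TV concentration bound (Weissman-type inequality, or equivalently an $\ell_\infty$ bound on each of the $|S|$ coordinates via Hoeffding followed by $\|\cdot\|_1 \le |S|\,\|\cdot\|_\infty$), we get $\Pr[\|\trnsd(\cdot|s,a)-\trns(\cdot|s,a)\|_1 > \eta] \le 2|S|\exp(-2n\eta^2/|S|^2)$ or similar. Then a union bound over all $|S\times A|$ pairs, setting the total failure probability to $\delta$, forces $n \gtrsim \frac{|S|^2}{\eta^2}\ln\frac{2|S\times A|}{\delta}$; choosing $\eta \propto \epsilon$ (absorbing the $\frac{1}{1-\gamma}$ and $R_{\max}$ factors into the $\mathcal{O}(\cdot)$ and the constant in front) yields exactly the stated $n > \mathcal{O}\!\left(\frac{|S|^2}{2\epsilon^2}\ln\frac{2|S\times A|}{\delta}\right)$. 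The truncation operator $\lceil\cdot\rceil$ keeps all iterates in a bounded set $\mathcal{X}$, which is what legitimizes the $\|\cdot\|_\infty$ contraction argument and the boundedness of the reward/value ranges used above; I would note this explicitly.

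The main obstacle — or at least the main place where care is needed — is the interplay between the arbitrary bounded initialization of $\hat{\qval}$ (which includes the possibly hallucinated heuristic $\hval$) and the claim that it still converges to $\qval^*_D$ at this rate. Here the contraction property is doing the heavy lifting: since $\bell_D$ is a $\gamma$-contraction on $(\mathcal{X},\|\cdot\|_\infty)$ by hypothesis, \emph{any} bounded starting point converges geometrically to the unique fixed point $\qval^*_D$, so the heuristic only affects transient behavior and not the asymptotic target or the sample complexity of learning $D$'s model; the sample count $n$ is really the number of transitions needed for the empirical MDP $D$ to be $\epsilon$-accurate, independent of $\hval$. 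A secondary subtlety is matching the exact constants in the bound (the factor $2$ in the denominator, the exact argument of the logarithm) to a specific concentration inequality — I would pick the Hoeffding-plus-union-bound route since it most transparently produces the $\frac{|S|^2}{2\epsilon^2}\ln\frac{2|S\times A|}{\delta}$ form, at the cost of a slightly loose $|S|^2$ dependence, and remark that sharper model-based analyses could improve the $|S|$ exponent but are not needed here.
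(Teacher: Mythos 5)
Your proposal arrives at the stated bound by a genuinely different route from the paper's. The paper goes through Lemma~\ref{lemma:q_value_bound}: it applies Hoeffding to the scalar one-step backup $r+\gamma\val(s')$ (empirical model versus true model) with a union bound over $|\sspace\times\aspace|$, left-multiplies by the discounted visitation $(\Id-\gamma\actpi\trnsd)^{-1}$ to get an occupancy-weighted bound proportional to $\sum_{s'}\nu(s'|s)\,N_D(s',\mu(s'))^{-1/2}$, and then collapses this to $|S|/\sqrt{n}$ by writing $N_D(s,a)=n\,d_D(s,a)$ with $n=|D|$ the \emph{total} dataset size and approximating $\nu\approx d_D$; its $|S|^2$ comes from crudely bounding $\sum_{s'}\sqrt{d_D}$ by $|S|$. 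You instead use a simulation lemma plus $\ell_1$ concentration of $\trnsd(\cdot|s,a)$, so your $|S|^2$ originates from the $\ell_1$-to-$\ell_\infty$ conversion over the next-state support, and your $n$ is a per-state-action count rather than $|D|$ --- a factor-$|S\times A|$ discrepancy in total samples that the $\mathcal{O}(\cdot)$ hides. Your route is the more standard one and makes the $1/(1-\gamma)$ and $R_{\max}$ dependence explicit (you absorb it into the constant; in the paper the horizon factor cancels against the $(1-\gamma)$ normalization of the visitation, which is why its final bound has no horizon term), whereas the paper's intermediate, visitation-weighted bound is the sharper object of the VI-LCB type before its last lossy step. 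Both arguments share the same soft spot: the theorem speaks of the iterates $\hat{\qval}$ reaching $\qval^*_D$, yet both you and the paper ultimately bound a model-estimation error and dismiss the heuristic/initialization via the contraction property of Theorem~\ref{thm:qhat_is_contraction} rather than supplying an explicit iteration count, so neither is more rigorous than the other on that point.
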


\begin{proof}
See proof at \ref{sec:proof_sample_complexity}.    
\end{proof}

Theorem \ref{thm:sample complexity} indicates that the sample complexity depends on the size of the state and action spaces. Additionally, it asserts that the hallucination guidance provided by large language models (LLMs) will be eliminated within a finite number of steps.

\section{Experiment}
\label{sec:exp}

We evaluate our proposed algorithm across eight diverse Gymnasium environments, spanning from the straightforward MountainCar to the highly complex Humanoid Run. The LLM heuristic term is integrated with TD3 \cite{td3}, forming \textbf{LLM-TD3}. We conduct two types of evaluations: a convergence speed test and an adaptability test, aiming to assess whether the LLM heuristic term accelerates learning and ensures that the agent can recover from incorrect guidance. Specifically, we compare LLM-TD3 with four widely used baselines, assessing convergence speed to determine each algorithm's sample efficiency.

\paragraph{Setting} We choose three simple tasks from classical control: \textit{cartpole}, \textit{mountaincarcontinuous}, and \textit{pendulum}, along with five complex tasks from Mujoco: \textit{ant}, \textit{halfcheetah}, \textit{hopper}, \textit{humanoid}, and \textit{walker2d}. \footnote{Please see \href{https://gymnasium.farama.org/}{gymnasium.farama.org} for more details.} 

\begin{table}[h]
\caption{Comparative Analysis of Convergence Speed and Guidance Impact on LLM-guided Q-learning Across Environments}
\setlength\extrarowheight{-3pt} 
\setlength\tabcolsep{3pt}
\centering\scriptsize

\begin{tabular}{@{}ccccccccc@{}}

\toprule
\multicolumn{2}{c}{Guidance Count} &  &       & \multicolumn{5}{c}{Algorithms}   \\ 
\cmidrule{1-2} \cmidrule{5-9}
good Q & bad Q & |S|+|A| & Env & PPO & SAC &  DDPG & TD3 & LLM-TD3 \\ \midrule
3 & 1 & 2+1 & MountainCarContinuous & \textit{Failed} &\textit{Failed} & \textit{Failed} & \textit{Failed} & \textbf{140k}\\
1 & 2 & 3+1 & InvertedPendulum &57k &28k &  153k & 30k & \textbf{25k} \\
2 & 2 & 4+1 & Pendulum &86k & 7k &6k &5k  & \textbf{4k} \\
1 & 1 & 11+3 & Hopper &\textit{Failed} & 634k &\textit{Failed} &474k & \textbf{269k}\\
1 & 1 & 17+6 & Walker2d & >1000k&524k &  \textit{Failed} & 569k & \textbf{514k}  \\
0 & 2 & 17+6 & HalfCheetah & \textit{Failed}&>1000k& 716k & 844k & \textbf{689k}\\
0 & 2 & 27+8 & Ant &\textit{Failed}  &>1000k & \textit{Failed}& 444k & \textbf{389k} \\
1 & 1 & 376+17 & Humanoid &\textit{Failed} &\textbf{444k} & \textit{Failed}& 664k & 594k\\

 \bottomrule
\end{tabular}

\label{tab:convergence_speed}
\end{table}

In our algorithm implementation, we employ GPT-4 to provide heuristic values based on task descriptions and human feedback. In the convergence speed test, we supply heuristic values at iteration step $t'=0$. Additionally, the hyperparameters of LLM-TD3 are fixed for each task.

\paragraph{Metrics} We evaluate each algorithm's sample efficiency through convergence speed. To account for the randomness of the environment and initialization, each algorithm is run 10 times, and the average performance of these runs is used for evaluation. After every 1e3 steps sampled by the agent, we verify the algorithm's performance by requiring the agent to run 10 episodes without updating, using the average score of these episodes as the evaluation value. An agent is considered converged when its performance reaches 80\% of the peak performance. If an algorithm fails to reach 80\% performance, it will be marked as "Failed."

\paragraph{Baseline} In terms of baseline selection, we compare our model with several of the most popular algorithms, namely: PPO\cite{schulman2017proximal_ppo,huang2022cleanrl}, SAC\cite{sac}, TD3, and DDPG\cite{lillicrap2015continuous_ddpg}.
\paragraph{Results} 
The convergence speed test results are presented in Table \ref{tab:convergence_speed}. The "Guidance Count" column shows the number of $(s, a, Q)$ pairs provided by the LLM based on the task description, with 1-4 different guidance sets per task. In our algorithm, these pairs are categorized as either good Q pairs (high-value $(s, a)$ pairs) or bad Q pairs (zero or negative-value $(s, a)$ pairs).
Our results demonstrate that:
\textbf{(1) Avoid Unnecessary Exploration:} The provided heuristic values reduce unnecessary exploration, thereby improving sample efficiency.
\textbf{(2) Robust Performance:} Our algorithm is robust and performs well without the need for hyperparameter tuning.
\textbf{(3) Generalizability:} LLM-guided Q-learning is generalizable and can adapt to different tasks, provided the LLM can understand the content or there is a rich description available.
\textbf{(4) Incorporating Human Feedback:} Our framework transforms general feedback, such as "do not be lazy" into (s, a) restrictions, enhancing sample efficiency without biasing the final performance.

\subsection{Adaptability Test}

To test the adaptability of our framework, we conducted experiments on the \textit{pendulum-v1} environment to evaluate whether the agent can recover from incorrect heuristic guidance at any training stage. Specifically, we provided incorrect heuristic values at three different periods: (1) the beginning of training, (2) during training, and (3) after convergence (4) every 10k steps. The incorrect heuristic encourages the agent to swing the pendulum downward and maintain no speed, which is contrary to the environment's reward structure.

\begin{figure}
    \centering
    \includegraphics[width=1\linewidth]{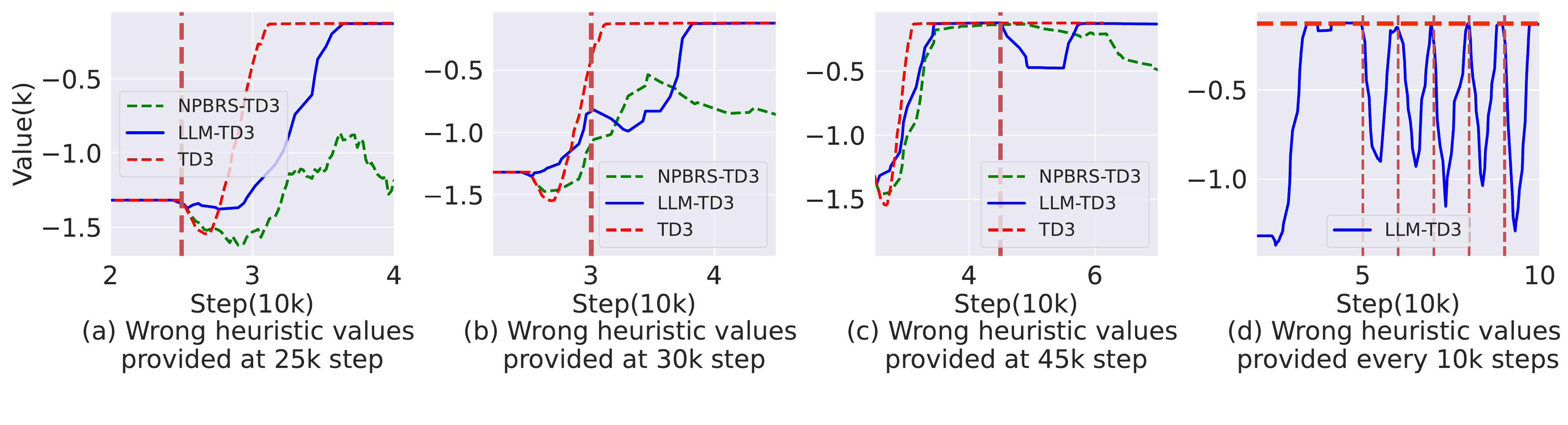}
     \caption{Results of the adaptability test at different periods. The red horizontal line in the fourth figure represents optimal performance. The results indicate that after receiving incorrect heuristic values, our algorithm quickly recovers to its original performance levels.}
   
    \label{fig:adaptability}
\end{figure}

For comparison, we also implemented a non-potential reward shaping method. As shown in Figure \ref{fig:adaptability}, the reward shaping enhanced TD3 algorithm failed to recover to the original performance level after receiving the incorrect heuristic. In contrast, the LLM-guided Q-learning approach successfully recovered, demonstrating the robustness and adaptability of our framework.

\section{Conclusion}

We introduced LLM-guided Q-learning, utilizing Large Language Models (LLMs) as heuristic value estimators to improve sample efficiency without biasing final performance. Our theoretical analysis and experiments highlight three key benefits:

\begin{enumerate}
    \item \textbf{Learning from Inaccurate Data}: Facilitates effective learning from imperfect data, incorporating guidance from generative models.
    \item \textbf{Immediate Q Value Impact}: Q-shaping directly alters the Q-value distribution, significantly affecting agent performance more immediately than reward shaping.
    \item \textbf{General and Robust}: Applicable to various tasks without task-specific hyperparameter tuning, demonstrating versatility and robustness.
\end{enumerate}

Our framework effectively addresses the challenges of using generative models to accelerate RL training, paving the way for the subsequent introduction of more powerful generative models.
\newpage
\bibliographystyle{unsrt}
\bibliography{reference2}
\appendix


\section{Notation}

Vectors are denoted by bold lowercase letters, $\mathbf{a}$, and matrices by uppercase letters, $A$. Individual vector elements or matrix rows are referenced using function notation, $\mathbf{a}(x)$. The identity matrix is represented as $\mathit{I}$. 
We use $\mathbb{E}_{p}[\cdot]$ to denote the expected value of a function under a distribution $p$. Specifically, for any space $\mathcal{X}$, distribution $p \in \operatorname{dist}(\mathcal{X})$, and function $\mathbf{a}:\mathcal{X}\to\real$, we have $\mathbb{E}_{p}[\mathbf{a}] := \mathbb{E}_{x \sim p} [\mathbf{a}(x)]$. 
The notation $|\cdot|$ indicates the element-wise absolute value of a vector, such that $|\mathbf{a}|(x) = |\mathbf{a}(x)|$. The infinity norm, $|\mathbf{a} |_{\infty}$, represents the maximum absolute value among the elements of $\mathbf{a}$.

We define \( G: \mathbb{R}^n \to \mathbb{R}^n \) as the LLM Generator, where \( n \) denotes the context length. The prompt \( p_0 \) serves as a guiding template to direct the LLM's output towards generating Python code. In our implementation, the input to \( G \) is the concatenation of \( p_0 \) and the task description, resulting in an executable Python method that outputs tuples \( (s, a, Q) \).
To represent the transformation from text to function or dataset. We denote $\mathcal{T}(\cdot)$ as the transformation from text to function, and use $\mathcal{R}(\cdot)$ to denote the transformation from text to reward.

\paragraph{Markov Decision Processes.}
We represent the environment with which we are interacting as a Markov Decision Process (MDP), defined in standard fashion: $\mdp := \langle\sspace, \aspace, \rew, \trns, \gamma, \rho\rangle$. 
$\sspace$ and $\aspace$ denote the state and action space, which we assume are discrete. 
We use $\saspace:=\sspace\times\aspace$ as the shorthand for the joint state-action space. The reward function $\rew \colon \saspace \to \dist([0,1])$ maps state-action pairs to distributions over the unit interval, while the transition function $\trns \colon \saspace \to \dist(\sspace)$ maps state-action pairs to distributions over next states. 
Finally, $\rho \in \dist(\sspace)$ is the distribution over initial states. 
We use $\erew$ to denote the expected reward function, $\erew(\sa) := \E_{r \sim \rew(\cdot \mid \sa)}[r]$, which can also be interpreted as a vector $\erew\in\real^{|\saspace|}$. 
Similarly, note that $\trns$ can be described as $\trns \colon (\saspace \times \sspace) \to \real$, which can be represented as a stochastic matrix $\trns \in \real^{|\saspace|\times|\sspace|}$.
We denote $\erewm$ and $\trnsm$ as the true environment's reward and transition functions.
For policy definition, we denote the space of all possible policies as $\pispace$. A policy \(\pi: \mathcal{S} \rightarrow \Delta(\mathcal{A})\) defines a conditional distribution over actions given states. A deterministic policy \(\mu: \mathcal{S} \rightarrow \mathcal{A}\) is a special case of \(\pi\), directly selecting one action per state with a probability of 1 for that action and 0 for others.
 We define an ``activity matrix'' $\actpi\in\real^{\sspace\times\saspace}$ for each policy, encoding $\pi$'s state-conditional state-action distribution. Specifically,  $\actpi(s, \langle \dot{s},a \rangle):=\pi(a|s)$ if $s = \dot{s}$, otherwise $\actpi(s, \langle \dot{s},a \rangle):=0$.
 We define a value function as $v \colon \Pi\to\sspace\to\real$ or $q \colon \Pi\to\saspace\to\real$, with bounded outputs.  
 We denote the value functions of a specific policy as $\valpi := \val(\pi)$ and $\qvalpi := \qval(\pi)$, representing them as vectors: $\valpi \in \real^{|\sspace|}$ and $\qvalpi \in \real^{|\saspace|}$. 
 We use $\qval^*$ as the abbreviation of $ \bigl(\qval_{\mdp}^{\pi_\mdp^*} \bigr)^*$,  $\qval_\xi^*$ as the abbreviation of $\bigl(\qval_{\xi}^{\pi_\xi^*} \bigr)^*$.
 The terms $\qval$ and $\val$ refer to discrete matrix representations, and specifically $\val(s)$ and $\qval(s)$ denote the output of an arbitrary value function on an arbitrary policy on a state,  while $Q : \real^{|\saspace|} \to \real$ denotes a more general definition suitable for both discrete and continuous contexts. 

The \textit{expected return} of an MDP $\mdp$, denoted $\rho_{\mdp}^T\val^*_\mdp$ or $\rho_{\mdp}^T\qval^*_\mdp$, is the discounted sum of rewards starting from initial states, acquired when interacting with the environment:
\begin{align*}
    \rho_{\mdp}^T\val^*_\mdp(\pi) &:= \rho^T\sum_{t=0}^\infty \left( \gamma \actpi \trns \right)^t \actpi \erew &
    \rho_{\mdp}^T\qval^*_\mdp(\pi) &:= \rho^T\sum_{t=0}^\infty \left( \gamma \trns \actpi \right)^t \erew
\end{align*}
Note that $(\valpi_\mdp)^* = \actpi (\qvalpi_\mdp)^*$. 
An \textit{optimal policy} of an MDP, denoted $\optp_\mdp$, maximizes the expected return under the initial state distribution: $\optp_\mdp := \arg\max_{\pi} \E_{\rho} [\valpi_\mdp]$.
The state-wise expected returns of an optimal policy can be written as $\val_\mdp^{\optp_\mdp}$. 
The Bellman consistency equation for $\mathbf{x}$ is  $\bell_\mdp(\mathbf{x}) := \erew + \gamma \trns \mathbf{x}$. Since $(\valpi_\mdp)^*$ is the only vector for which $(\valpi_\mdp)^* = \actpi \bell_\mdp((\valpi_\mdp)^*)$ holds. 
For any state $s$, the probability of reaching state $s'$ after t time steps under policy $\pi$ is $[(\actpi\trns)^t](s, s')$. Furthermore, $\sum_{t=0}^\infty \left( \gamma \actpi\trns \right)^t = \left( \Id -  \gamma \actpi \trns \right)^{-1}$. The discounted visitation of $\pi$ is $ \frac{1}{1-\gamma}\left( \Id -  \gamma \actpi \trns \right)^{-1}$.

\paragraph{Datasets.} 

We define fundamental concepts crucial for fixed-dataset policy optimization. Let $D := {\langle s,a,r,s'\rangle }^d$ represent a dataset of $d$ transitions, and let $\dataspace$ denote the set of all such datasets. 
Our focus is on datasets sampled from a distribution $\Phi : \dist(\saspace)$, typically based on state-action pairs generated by a stationary policy. 
A dataset $D$ containing $d$ tuples $\langle s,a,r,s'\rangle$ is sampled as $D \sim \Phi_d$, where pairs $\sa$ are drawn from $\Phi$, and rewards $r$ and subsequent states $s'$ are sampled independently from the reward function $\rew(\cdot|\sa)$ and the transition function $\trns(\cdot|\sa)$, respectively.
We denote $D(\sa)$ as the multi-set of all $\langle r,s' \rangle$ pairs, and use $\countdsa \in \real^{|\saspace|}$ to denote the count vector, where $\countdsa(\sa) := |D(s,a)|$. 
We define the empirical reward vector as $\erewd(\sa) := \sum_{r,s' \in D(\sa)} \frac{r}{|D(\sa)|}$ and empirical transition matrix as $\trnsd(s'|\sa) :=  \sum_{r,\dot{s}' \in D(\sa)} \frac{\mathbb{I}(\dot{s}' = s')}{|D(\sa)|}$ for all state-action pairs with $\countdsa(\sa) > 0$. 
For state-action pairs where $\countdsa(\sa) =0$, the maximum-likelihood estimates of reward and transition cannot be clearly defined, so they remain unspecified.
The bounds hold no matter how these values are chosen, so long as $\erewd$ is bounded and $\trnsd$ is stochastic. 
The empirical policy of a dataset $D$ is defined as $\emp(a|s) := \frac{|D(\sa)|}{|D(\langle s,\cdot \rangle)|}$ except where $\countdsa(\sa) = 0$, where it can similarly be any valid action distribution. 
The empirical visitation distribution of a dataset $D$ is computed analogously to the regular visitation distribution but uses $\trnsd$ in place of $\trns$. Thus it's given by $ \frac{1}{1-\gamma}\left( \Id -  \gamma \actpi \trnsd \right)^{-1}$.

\section{Proofs}

\begin{lemma}[Decomposition]
\label{lemma:delta_v_transform}
For any MDP $\xi$ and policy $\pi$, consider the Bellman fixed-point equation given by, let $(\valpi_\xi)^*$ be defined as the unique value vector such that $(\valpi_\xi)^* = \actpi(\rawerew_\xi + \gamma \rawtrns_\xi (\valpi_\xi)^*)$, and let $\val$ be any other value vector. Assume that $\pi(a|s) = 1$ if $a=\arg \max_a (\qval_\xi^{\pi})^*(s,a)$, otherwise $\pi(a|s) = 0$.
We have:
\begin{align}
|\qval_\xi^*(s,\mu(s)) -  \qval(s,\mu(s))| &= |\bigl((\Id -  \gamma \actpi \trns_\xi)^{-1} (\actpi(\rawerew_\xi + \gamma \rawtrns_\xi \val) - \val)\bigr)(s)|
\end{align}
\end{lemma}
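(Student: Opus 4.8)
The plan is to start from the two defining relations — the exact Bellman fixed-point equation for $(\valpi_\xi)^*$ and the (approximate) value vector $\val$ — and manipulate their difference into a form where the resolvent $(\Id - \gamma \actpi \trns_\xi)^{-1}$ appears explicitly. First I would write $(\valpi_\xi)^* = \actpi(\rawerew_\xi + \gamma \rawtrns_\xi (\valpi_\xi)^*)$, which by the standard identity $(\valpi_\xi)^* = \actpi(\qvalpi_\xi)^*$ and the assumption that $\pi$ is greedy with respect to $(\qvalpi_\xi)^*$ (so that $\mu(s) = \arg\max_a (\qvalpi_\xi)^*(s,a)$ picks out exactly the action $\pi$ puts mass $1$ on) lets me identify $(\valpi_\xi)^*(s) = \qval_\xi^*(s,\mu(s))$ and likewise $(\actpi \val)(s) = \val(s,\mu(s))$. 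So the left-hand side of the claimed identity is $|((\valpi_\xi)^* - \actpi\val)(s)|$, up to this bookkeeping about the greedy action.

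Next I would compute $(\valpi_\xi)^* - \actpi \val$ directly. Using the fixed-point equation for $(\valpi_\xi)^*$:
\begin{align*}
(\valpi_\xi)^* - \actpi\val &= \actpi(\rawerew_\xi + \gamma \rawtrns_\xi (\valpi_\xi)^*) - \actpi\val \\
&= \actpi\rawerew_\xi + \gamma \actpi\rawtrns_\xi (\valpi_\xi)^* - \actpi\val.
\end{align*}
Now add and subtract $\gamma \actpi \rawtrns_\xi \val$ to split off the term $\actpi(\rawerew_\xi + \gamma\rawtrns_\xi\val) - \val$, which is exactly the Bellman residual of $\val$ appearing on the right-hand side. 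This gives
\begin{align*}
(\valpi_\xi)^* - \actpi\val &= \gamma \actpi\rawtrns_\xi\bigl((\valpi_\xi)^* - \actpi\val\bigr) + \bigl(\actpi(\rawerew_\xi + \gamma\rawtrns_\xi\val) - \val\bigr),
\end{align*}
where I have used $\actpi \val$ in place of $\val$ inside the recursive term, which is legitimate because $\actpi$ acts as a "select the greedy row" operator and $\actpi\actpi = \actpi$ on the relevant coordinates — this is the one place I would want to double-check the matrix algebra carefully. Rearranging, $(\Id - \gamma\actpi\rawtrns_\xi)\bigl((\valpi_\xi)^* - \actpi\val\bigr) = \actpi(\rawerew_\xi + \gamma\rawtrns_\xi\val) - \val$, and since $\Id - \gamma\actpi\rawtrns_\xi$ is invertible (its inverse is the discounted-visitation operator, stated in the Notation section), I can left-multiply by $(\Id - \gamma\actpi\rawtrns_\xi)^{-1}$ to obtain $(\valpi_\xi)^* - \actpi\val = (\Id - \gamma\actpi\rawtrns_\xi)^{-1}\bigl(\actpi(\rawerew_\xi + \gamma\rawtrns_\xi\val) - \val\bigr)$. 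Evaluating coordinate $s$ and taking absolute values yields the claim.

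The main obstacle I anticipate is purely notational rather than mathematical: being careful about the interplay between the $|\saspace|$-dimensional objects ($\qval$, $\rawerew_\xi$, $\val$ viewed as a $q$-vector) and the $|\sspace|$-dimensional objects ($\valpi_\xi$, $\actpi\val$), and making sure the substitution of $\actpi\val$ for $\val$ inside the recursion is justified by the greedy structure of $\pi$ (i.e., $\val(s,\mu(s)) = (\actpi\val)(s)$ for the one action in the support of $\pi(\cdot|s)$). Once the operator $\actpi\rawtrns_\xi$ is recognized as a genuine stochastic operator on state space and $\Id - \gamma\actpi\rawtrns_\xi$ as invertible, the rest is the standard "geometric series / resolvent" argument and should go through without difficulty.
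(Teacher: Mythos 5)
Your proposal is correct and follows essentially the same route as the paper's proof: add and subtract the fixed point $(\valpi_\xi)^*$ (equivalently, add and subtract $\gamma \actpi \rawtrns_\xi \val$) to rewrite the Bellman residual $\actpi(\rawerew_\xi + \gamma \rawtrns_\xi \val) - \val$ as $(\Id - \gamma \actpi \trns_\xi)\bigl((\valpi_\xi)^* - \val\bigr)$, invert the resolvent, and index at $s$ using the greedy/deterministic structure of $\pi$. The one wrinkle you flag yourself --- inserting $\actpi\val$ into the recursive term --- is unnecessary and creates a dimension mismatch (the paper keeps $\val$ as a state-indexed vector throughout and only reinterprets $\val(s)$ as $\qval(s,\mu(s))$ at the final indexing step), but this is a notational issue rather than a gap in the argument.
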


\begin{proof}
\begin{align*}
\actpi(\rawerew_\xi + \gamma \rawtrns_\xi \val) - \val &= \actpi(\rawerew_\xi + \gamma \rawtrns_\xi \val) - (\valpi_\xi)^* + (\valpi_\xi)^* - \val \\
&= \actpi(\erew_\xi + \gamma \trns_\xi \val) - \actpi(\erew_\xi + \gamma \trns_\xi (\valpi_\xi)^*) + (\valpi_\xi)^* -  \val \\
&= \gamma \actpi \trns_\xi (\val - (\valpi_\xi)^*) + ((\valpi_\xi)^* -  \val) \\
&= ((\valpi_\xi)^* -  \val) - \gamma \actpi \trns_\xi ((\valpi_\xi)^* -  \val) \\
&= (\Id -  \gamma \actpi \trns_\xi)((\valpi_\xi)^* -  \val)
\end{align*}
Note that $(\val_\xi^{\pi})^* = \actpi(\qval_\xi^\pi)^*$, After we expand the value function we have:
\begin{align*}
(\Id -  \gamma \actpi \trns_\xi)^{-1}\bigl(\actpi(\rawerew_\xi + \gamma \rawtrns_\xi \val) \bigr) &= \actpi(\qval_{\xi}^\pi)^* - \val \\
&= \actpi(\qval_{\xi}^\pi)^* - \act\qval
\end{align*}
By indexing at $\langle s,\mu(s) \rangle$, we have:
$$|\qval_\xi^*(s,\mu(s)) -  \qval(s,\mu(s))| = |\bigl((\Id -  \gamma \actpi \trns_\xi)^{-1} (\actpi(\rawerew_\xi + \gamma \rawtrns_\xi \val) - \val)\bigr)(s)|$$

\end{proof}

\begin{lemma}[Convergence Bound]
\label{lemma:q_value_bound}   

Suppose that $s'$ and $r$ are sampled independently and identically distributed (iid) from $P(\cdot | s,a)$ and $R(\cdot | s,a)$ respectively. Then, with probability at least $1 - \delta$, we have:

$$| \qval_D^*(s,\mu(s)) - \hat{\qval}(s,\mu(s))| \le \left(\sqrt{\frac{1}{2} \ln \frac{2|\sspace\times\aspace|}{\delta}}\right) \sum_{s'}\nu_D(s'|s_0=s)\frac{1}{\sqrt{\countdsa(\langle s',\mu(s')\rangle)} } $$
\end{lemma}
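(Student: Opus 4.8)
\textbf{Proof proposal for Lemma~\ref{lemma:q_value_bound}.}

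The plan is to combine the exact decomposition from Lemma~\ref{lemma:delta_v_transform} with a concentration bound on the per-state-action estimation error. First I would apply Lemma~\ref{lemma:delta_v_transform} with $\xi = D$ (the empirical MDP) and $\val$ taken to be the value vector induced by $\hat{\qval}$ under the greedy policy $\pi$ derived from $\hat{\qval}$, so that $\mu = \pi$ is deterministic and greedy with respect to $\hat{\qval}$. This yields the identity
$$|\qval_D^*(s,\mu(s)) - \hat{\qval}(s,\mu(s))| = \Bigl|\bigl((\Id - \gamma \actpi \trns_D)^{-1}\bigl(\actpi(\erewd + \gamma \trnsd \val) - \val\bigr)\bigr)(s)\Bigr|.$$
The key observation is that the ``residual'' vector $\actpi(\erewd + \gamma \trnsd \val) - \val$ — call it $\boldsymbol{\Delta}$ — is exactly the Bellman residual of $\hat{\qval}$ evaluated on the \emph{true} underlying reward/transition $P,R$ rather than on the empirical $\erewd,\trnsd$; since $\hat{\qval}$ is (by Theorem~\ref{thm:qhat_is_contraction}, or by the fixed-point hypothesis) a fixed point of the \emph{empirical} Bellman operator $\bell_D$, the residual against the true operator is precisely the sampling error $\gamma(\trns - \trnsd)\val$ plus $(\erew - \erewd)$ at each state-action pair. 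Actually, more directly: taking $\val$ to be the empirical fixed point means $\actpi(\erewd + \gamma\trnsd\val) - \val = 0$ identically, which is not useful; so instead I take $\val$ to be the value vector of $\hat{\qval}$ and use that $\hat{\qval}$ satisfies its defining (truncated) update. The honest route is: write $\boldsymbol{\Delta}(s,a) = [\actpi \bell_D(\val)](s) - \val(s)$ and bound $|\boldsymbol{\Delta}(s,a)|$ at each $(s,a)$ by the one-step empirical-vs-expected deviation.

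Second, I would bound each coordinate of $\boldsymbol{\Delta}$. At state-action pair $\langle s', \mu(s')\rangle$, the residual is an average of $\countdsa(\langle s',\mu(s')\rangle)$ i.i.d.\ terms each lying in a bounded interval (width at most $1$ after appropriate normalization of rewards to $[0,1]$ and values via the truncation operator $\lceil\cdot\rceil$ to $[0,\tfrac{R_{\max}}{1-\gamma}]$), so Hoeffding's inequality gives, for each fixed pair,
$$\Pr\Bigl[|\boldsymbol{\Delta}(\langle s',\mu(s')\rangle)| > \epsilon'\Bigr] \le 2\exp\bigl(-2\,\countdsa(\langle s',\mu(s')\rangle)\,\epsilon'^2\bigr).$$
Setting the right-hand side to $\delta / |\sspace\times\aspace|$ and union-bounding over all $|\sspace\times\aspace|$ pairs, with probability at least $1-\delta$ we get simultaneously for every pair
$$|\boldsymbol{\Delta}(\langle s',\mu(s')\rangle)| \le \sqrt{\frac{1}{2\,\countdsa(\langle s',\mu(s')\rangle)}\ln\frac{2|\sspace\times\aspace|}{\delta}}.$$

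Third, I would propagate this coordinate-wise bound through the resolvent $(\Id - \gamma\actpi\trns_D)^{-1}$. Using the identity from the preliminaries, $(\Id - \gamma\actpi\trnsd)^{-1}(s,\cdot)$ expanded against the state-action residual yields $\sum_{t\ge0}\gamma^t (\actpi\trnsd)^t$, whose $(s,s')$ entry is the discounted probability of being at $s'$ after $t$ steps; summing over $t$ and the corresponding action $\mu(s')$ gives exactly the empirical discounted visitation $\nu_D(s'\mid s_0 = s)$ (this is where the factor naming $\nu_D$ and the sum over $s'$ in the statement comes from — note the $\tfrac{1}{1-\gamma}$ is folded into $\nu_D$'s normalization). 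Since the resolvent has nonnegative entries, I can pass the absolute value inside and bound
$$|\qval_D^*(s,\mu(s)) - \hat{\qval}(s,\mu(s))| \le \sum_{s'} \nu_D(s'\mid s_0=s)\,|\boldsymbol{\Delta}(\langle s',\mu(s')\rangle)| \le \sqrt{\tfrac12\ln\tfrac{2|\sspace\times\aspace|}{\delta}}\;\sum_{s'}\frac{\nu_D(s'\mid s_0=s)}{\sqrt{\countdsa(\langle s',\mu(s')\rangle)}},$$
pulling the common $\sqrt{\tfrac12\ln(2|\sspace\times\aspace|/\delta)}$ factor out of the sum, which is exactly the claimed bound.

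The main obstacle I anticipate is the first step: pinning down precisely \emph{which} value vector $\val$ to feed into Lemma~\ref{lemma:delta_v_transform} so that the residual $\actpi\bell_D(\val) - \val$ genuinely reduces to an average of i.i.d.\ bounded samples amenable to Hoeffding. One has to be careful that $\hat{\qval}$ is a fixed point of the empirical operator but we want to measure its deviation from $\qval_D^*$ — so in fact $\val$ should be interpreted as the running estimate and the ``residual'' should be read as the gap $\qval_D^* - \hat{\qval}$ itself fed back through the Bellman recursion, with the stochasticity entering because $\trnsd, \erewd$ are themselves random. A cleaner framing: apply Lemma~\ref{lemma:delta_v_transform} with $\xi$ the \emph{true} MDP and $\val$ the value of $\hat{\qval}$, so the residual becomes $\actpi(\erew + \gamma\trns\val) - \val = \actpi(\erew - \erewd) + \gamma\actpi(\trns - \trnsd)\val$ (using $\actpi\bell_D(\val) = \val$), which is manifestly a sum of mean-zero bounded sampling errors; then the resolvent is $(\Id - \gamma\actpi\trns)^{-1}$ and one argues the empirical visitation $\nu_D$ approximates it, or one simply restates the whole lemma relative to $D$. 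Reconciling the statement's use of $\qval_D^*$, $\trns_D$-flavored visitation $\nu_D$, and true-sample concentration is the delicate bookkeeping step; everything after it is routine Hoeffding plus a nonnegative-matrix triangle inequality.
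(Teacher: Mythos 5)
Your proposal follows essentially the same route as the paper's proof: apply the decomposition of Lemma~\ref{lemma:delta_v_transform}, bound the one-step empirical-vs-expected Bellman backup error at each state-action pair via Hoeffding plus a union bound over $|\sspace\times\aspace|$, and propagate the coordinate-wise bound through the nonnegative resolvent $(\Id-\gamma\actpi\trnsd)^{-1}$ to obtain the visitation-weighted sum. You are in fact more explicit than the paper about the delicate choice of which value vector $\val$ and which MDP $\xi$ to feed into the decomposition so that the residual becomes a mean-zero sampling error (the paper glosses over this and handles the heuristic shift $\hval$ with a one-line remark), so the proposal is correct and matches the intended argument.
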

\begin{proof}
See proof at \ref{sec:proof_q_value_bound}
\end{proof}

\subsection{Proof of Theorem \ref{thm:qhat_is_contraction}}
\label{sec:proof_qhat_contraction}
Note that $\qval$ is the matrix representation of the Q function. In the proof of this section, we use a more general $Q : \real^\saspace \to \real$ to represent the Q function.
The update formula for $\hat{Q}$ iteration is:
$$\hat{Q}^{k+1}(s,a) =(1-\alpha) \hat{Q}^{k}(s,a) 
+ \alpha \left(  r(s,a) + \gamma\sum_{s' \in S}P(s'|s,a)\max_a \hat{Q}(s',a) +\mathbf{h}(s,a)\right)$$
And the update formula can be defined as \textbf{Bellman optimal operator $\mathcal{B}$}:
$$\hat{Q}^*(s,a)  = \mathcal{B}\hat{Q}^* = r(s,a) +\gamma \sum_{s' \in S}P(s'|s,a) \max_a \hat{Q}^*(s',a)$$
Next we prove that the Bellman optimal operator $\mathcal{B}$ is $\gamma$-contraction operator on $\hat{Q}$:
\begin{align*}
    \|\mathcal{B}\hat{Q} - \mathcal{B}\hat{Q}'\|_{\infty} &=
    \gamma \max_{s,a \in \mathcal{S,A}}|\sum_{s'}P(s'|s,a)[\max_a \hat{Q}(s',a) - \max_a \hat{Q}'(s',a)]| \\
    &\le  \gamma \max_{s,a \in \mathcal{S,A}}|\max_{s'}|\left(\max_a \hat{Q}(s',a) - \max_a \hat{Q}'(s',a)\right)|| \\
    & = \gamma \| \hat{Q}-\hat{Q}'\|_{\infty}
\end{align*}
The Bellman optimal operator $\mathcal{B}$ is a $\gamma$-contraction operator on $\hat{Q}$. This means that as the number of iteration k increases and we stop providing heuristic values before convergence, $\hat{Q}^k$ will get closer to the fixed point, i.e., $\hat{Q}^*$.
Next, we prove that the converged heuristic-guided Q function equals the traditional Q function. Define the following:
\begin{align*}
    \Theta_H & \text{ denotes the set of terminal states,} \\
    \Theta_{H-1} & \text{ denotes the set of states one step before the terminal,} \\
    & \vdots \\
    \Theta_1 & \text{ denotes the set of states at the initial step.}
\end{align*}
For $s \in \Theta_{H-1}$, it's clear that $\hat{Q}^*(s) = Q^*(s)$, because:
$$ Q^*(s,a)|_{s \in \Theta_{H-1}}=\hat{Q}^*(s,a)|_{s \in \Theta_{H-1}} = r(s,a)+\gamma Q^*(s')=r(s,a)$$
For $s \in \Theta_{H-2}$, which is the previous state in $\Theta_{H-1}$, we have:
\begin{align*}
\hat{Q}^*(s,a)|_{s \in \Theta_{H-2}} &= r(s,a)+\gamma \sum_{s' \in \Theta_{H-1} } P(s'|s,a) \hat{Q}^*(s')\\
&= r(s,a)+\gamma \sum_{s' \in \Theta_{H-1} } P(s'|s,a) Q^*(s')\\
&=  Q^*(s,a)|_{s \in \Theta_{H-2}}
\end{align*}
With constant iteration, we have: $\hat{Q}^* = Q^*$ . Specifically, we have: $\qval_{D} = \hat{\qval}_{D}$

\subsection{Proof of Suboptimality}
\label{sec:proof_suboptimality}
\begin{align*}
(\qval^{\optp})^*(s,a^{\optp}) - (\qval^{\optpd})^*(s,a^{\optpd}) &= 
(\qval^{\optp})^*(s,a^*) +[- (\hat{\qval}_D^{\pi})^*(s,a^{\pi}) + (\hat{\qval}_D^{\pi})^*(s,a^{\pi}) ] +\\&\hspace{4cm}[- (\hat{\qval}_D^{\optpd})^*(s,a^{\optpd}) +(\hat{\qval}_D^{\optpd})^*(s,a^{\optpd})]  - (\qval^{\optpd})^*(s,a^{\optpd})
\\&= [(\qval^{\optp})^*(s,a^*) - (\hat{\qval}_D^{\pi})^*(s,a^{\pi})] + [(\hat{\qval}_D^{\pi})^*(s,a^{\pi}) - (\hat{\qval}_D^{\optpd})^*(s,a^{\optpd})] 
\\&\hspace{6cm}+[(\hat{\qval}_D^{\optpd})^*(s,a^{\optpd}) - (\qval^{\optpd})^*(s,a^{\optpd})]
\\ &\le [(\qval^{\optp})^*(s,a^*) - (\hat{\qval}_D^{\pi})^*(s,a^{\pi})]  +[(\hat{\qval}_D^{\optpd})^*(s,a^{\optpd}) - (\qval^{\optpd})^*(s,a^{\optpd})] 
\end{align*}
We treat the RHS as a function of $\pi$, so we have:
\begin{align*}
    (\qval^{\optp})^*(s,a^{\optp}) - (\qval^{\optpd})^*(s,a^{\optpd}) &\le
     \inf_{\pi \in \pispace} \bigl(  [(\qval^{\optp})^*(s,a^*) - (\hat{\qval}_D^{\pi})^*(s,a^{\pi})]  +  [(\hat{\qval}_D^{\optpd})^*(s,a^{\optpd}) - (\qval^{\optpd})^*(s,a^{\optpd})] \bigr)
     \\&\le  \inf_{\pi \in \pispace} \bigl(  (\qval^{\optp})^*(s,a^*) - (\hat{\qval}_D^{\pi})^*(s,a^{\pi}) \bigr) +  \sup_{\pi \in \pispace}((\hat{\qval}_D^{\pi})^*(s,a^\pi) - (\qval^{\pi})^*(s,a^\pi))
     \\&=\inf_{\pi \in \pispace} \bigl(  (\qval^{\optp})^*(s,a^*) - (\qval_D^{\optpd})^*(s,a^{\optp}) + (\qval_D^{\optpd})^*(s,a^{\optp}) +(\hat{\qval}_D^{\pi})^*(s,a^{\pi}) \bigr)  \\&\hspace{6cm} +\sup_{\pi \in \pispace}((\hat{\qval}_D^{\pi})^*(s,a^\pi) - (\qval^{\pi})^*(s,a^\pi)) 
\end{align*}

\subsection{Proof of Lemma \ref{lemma:q_value_bound}}
\label{sec:proof_q_value_bound}
Consider that for any $\sa$, the expression $\erewd(\sa) + \gamma \trnsd(\sa) \valpi$ can be equivalently expressed as an expectation of random variables, $$\erewd(\sa) + \gamma \trnsd(\sa) \val = \frac{1}{\countdsa(\sa)}\sum_{r,s' \in D(\sa)} r + \gamma \val(s')$$
each with expected value:
$$\E_{r,s' \in D(\sa)} [r + \gamma \val(s')] = \E_{\substack{r \sim \rew(\cdot|\sa) \\ s' \sim \trns(\cdot|\sa)}} [r + \gamma \val(s')] = [\erewm + \gamma \trnsm \val](\sa).$$
 
 Hoeffding's inequality indicates that the mean of bounded random variables will approximate their expected values with high probability. By applying Hoeffding's inequality to each element in the $|\sspace \times \aspace|$ state-action space and employing a union bound, we establish that with probability at least $1 - \delta$,
$$\left|(\erewm + \gamma \trnsm \val) - \left(\erewd + \gamma \trnsd \val \right)\right| \leq \frac{1}{1-\gamma}\sqrt{\frac{1}{2} \ln \frac{2|\sspace\times\aspace|}{\delta}\countdsa^{-1}}$$
We can left-multiply $\actpi$ and rearrange to get:
$$\left|\actpi(\erewm + \gamma \trnsm \val) - \actpi\left(\erewd + \gamma \trnsd \val \right)\right| \leq \left(\frac{1}{1-\gamma}\sqrt{\frac{1}{2} \ln \frac{2|\sspace\times\aspace|}{\delta}}\right)\actpi\ircountdsa$$
then we left-multiply the discounted visitation of $\pi$:
\begin{align*}
|\left( \Id -  \gamma \actpi \trnsd \right)^{-1}[\actpi(\erewm + \gamma \trnsm \val) - \actpi\left(\erewd + \gamma \trnsd \val \right)]| &\leq
\left(\frac{1}{1-\gamma}\sqrt{\frac{1}{2} \ln \frac{2|\sspace\times\aspace|}{\delta}}\right) \left( \Id -  \gamma \actpi \trnsd \right)^{-1} \actpi\ircountdsa \\
\end{align*}

This matrix: $\left( \Id -  \gamma \actpi \trnsd \right)^{-1} \actpi\ircountdsa$,belongs to the space $\mathbb{R}^{|S|}$. By indexing at state $s$, we obtain:

$$\left( \Id -  \gamma \actpi \trnsd \right)^{-1} \actpi\ircountdsa (s) = (1-\gamma)\sum_{s'}\nu(s'|s_0=s)\frac{1}{\sqrt{N_D(\smu)} }$$

Finally, by integrate these terms together we have the bound on Lemma \ref{lemma:q_value_bound}:

$$| \qval_D^*(s,\mu(s)) - \qval(s,\mu(s))| \le \left(\sqrt{\frac{1}{2} \ln \frac{2|\sspace\times\aspace|}{\delta}}\right) \sum_{s'}\nu(s'|s_0=s)\frac{1}{\sqrt{N_D(\langle s',\mu(s')\rangle)} } $$

Given that this inequality is universally applicable to any $\qval$, and acknowledging that the heuristic term $\mathbf{h}$ supplied by the LLM serves as a constant within the temporal-difference (TD) update mechanism of the Q-function, it follows that:

\begin{align*}
    | \qval_D^*(s,\mu(s)) - \hat{\qval}(s,\mu(s))| &=  | \qval_D^*(s,\mu(s)) - \qval(s,\mu(s))-\mathbf{h}(s,\mu(s))|\\
    &\le \left(\sqrt{\frac{1}{2} \ln \frac{2|\sspace\times\aspace|}{\delta}}\right) \sum_{s'}\nu(s'|s_0=s)\frac{1}{\sqrt{N_D(\langle s',\mu(s')\rangle)} }
\end{align*}

\subsection{Proof of Theorem \ref{thm:sample complexity}}
\label{sec:proof_sample_complexity}

To get the sample complexity of convergence. By Lemma \ref{lemma:q_value_bound},we have:
\begin{align*}
        | \qval_D^*(s,\mu(s)) - \hat{\qval}(s,\mu(s))| 
        &\le \left(\sqrt{\frac{1}{2} \ln \frac{2|\sspace\times\aspace|}{\delta}}\right) \sum_{s'}\nu(s'|s_0=s)\frac{1}{\sqrt{N_D(\langle s',\mu(s')\rangle)} } \\
        & =  \left(\sqrt{\frac{1}{2} \ln \frac{2|\sspace\times\aspace|}{\delta}}\right) \sum_{s'}\sqrt{\nu(s'|s_0=s)}  \frac{\sqrt{\nu(s'|s_0=s)}}
        {\sqrt{nd_D(s,a)} } \tag{ $d_D(s,a) = \frac{N_D(\langle s,a \rangle)}{|D|}$} \\
        &=\left(\sqrt{\frac{1}{2} \ln \frac{2|\sspace\times\aspace|}{\delta}}\right) \sum_{s'}\sqrt{d_D(s,\mu(s))}  \frac{\sqrt{d_D(s,\mu(s))}}
        {\sqrt{nd_D(s,a)} } \tag{ $\nu(s)\pi(\mu(s)|s) \approx d_D(s,\mu(s))
        $} \\
        &\le \frac{\left(\sqrt{\frac{1}{2} \ln \frac{2|\sspace\times\aspace|}{\delta}}\right)}{\sqrt{n}} \sum_{s'} \sqrt{d_D(s',\mu(s))}\\
        & \le  \left(\sqrt{\frac{1}{2} \ln \frac{2|\sspace\times\aspace|}{\delta}}\right)\frac{ |S|}{\sqrt{n}}
\end{align*}
Then, when $n > \mathcal{O}\left( \frac{|S|^2}{2\epsilon^2}\ln{\frac{2|S \times A|}{\delta}} \right)$,we have $| \qval_D^*(s,\mu(s)) - \qval^*(s,\mu(s))| \le \epsilon$.

\section{Online-Guidance Implementation}
\label{sec:online_guidance}
In the implementation of the online-guidance Q-learning algorithm, each training step involves checking for an input of the $(s, a, Q)$ pair. Upon detecting this input, the algorithm utilizes Equation \ref{eq:online_hq} to train the $(s, a, Q)$ pair in conjunction with the target item. This process adjusts the $Q$-function, thereby influencing the agent's decision-making behavior in subsequent sampling steps.

\begin{algorithm}[H]
\caption{Online Heuristic Q Learning Algorithm}
\label{alg:online_hq}
\begin{algorithmic}[1]
\State \textbf{Inputs:} Local MDP $D$, Large Language Model $G$,prompt $p$,
\State \textbf{Initialization:} Initialize Heuristic:$G(p)$,initialize actor-critic $(\mu_\phi,Q_{\theta_1},Q_{\theta_2})$ and target actor-critic $(\mu_{\phi^{'}},Q_{\theta^{'}_1},Q_{\theta^{'}_2})$ 
\State \textbf{Generate Q buffer:} $D_g \leftarrow \{ (s_i, a_i,Q_i) \mid (s_i, a_i,Q_i) = D(G(p)), \, i = 1, 2, \ldots, n \}$
\State \textbf{Q Boostrapping:} $\theta =\theta -\alpha \nabla_\theta L_{bootstrap}$ using  eq \ref{eq:bootstrap_loss}
    \For{iteration $t' \in T=1,2,3...$}
        \State Sample $(s,a,r,s')$ from Env
        \State $D \leftarrow D \cup (s,a,r,s')$        
        \State Sample N transitions $(s,a,r,s')$ from $D$
        \State \textbf{Detect interaction:} $D_g = \begin{cases}D(G(p)) & \text{if detected} \\\text{null} & \text{if not}  \end{cases}$ 
        \State $\hat{a} \leftarrow \mu(s') + \epsilon, \epsilon \sim \text{clip}(\mathcal{N}(0,\sigma),-c,c)$
        \State $y(s') \leftarrow \lceil r+\gamma \min_{i=1,2}\qval_{\theta_i}(s',\hat{a})  \rceil $
        \State Update critics $\theta_i  \leftarrow \arg \min_{\theta_i}  L_{online}(\theta_i)$
        
        \If{ $t' \mod d$}
            \State Update actor $\phi \leftarrow \arg \min_\phi L_{actor}(\phi)$ 
            \State Update target networks:
            \State $\theta_{i}^{'} \leftarrow \tau\theta_i + (1-\tau)\theta_{i}^{'}$
            \State $\phi^{'} \leftarrow \tau\phi + (1-\tau)\phi^{'} $
        \EndIf
        \EndFor
\State \textbf{end while}
\end{algorithmic}
\end{algorithm}

\section{Experiment Details}
Here we use table \ref{tab:code_repo} to list the open source repositories of the algorithms used in the experiment.

\begin{table}[h]
  \caption{Baseline Code Source}
  \label{tab:code_repo}
  \centering
  \begin{tabular}{ll}
    \toprule
    Algorithm     & Code Repository     \\
    \midrule
    PPO & \href{https://docs.cleanrl.dev/rl-algorithms/ppo/}{https://docs.cleanrl.dev/rl-algorithms/ppo/}     \\
    TD3     & \href{https://github.com/sfujim/TD3}{https://github.com/sfujim/TD3}      \\
    DDPG     & \href{https://github.com/sfujim/TD3}{https://github.com/sfujim/TD3 }       \\
    SAC &   \href{https://github.com/boyu-ai/Hands-on-RL}{https://github.com/boyu-ai/Hands-on-RL} \\ &\href{https://github.com/haarnoja/sac}{https://github.com/haarnoja/sac}
    
    \\ 
    \bottomrule
  \end{tabular}
\end{table}

In our experiments, we utilized "ChatGPT-Classic" as the language model to provide heuristic Q-values, thereby accelerating the exploration process in the LLM-TD3 algorithm. The experiments were conducted on a host equipped with a 48-core CPU, 24 GB of video memory, and 120 GB of RAM. For complex tasks, the agent took approximately 2 to 4 hours to converge, whereas for simpler tasks, convergence was achieved within 10 to 30 minutes. Table \ref{tab:exp_env} provides a detailed description of the experimental environment.

\begin{table}[h]
\caption{Experimental Environment}
\label{tab:exp_env}
\centering
\begin{tabular}{ll}
\toprule
Resource & Specification \\
\midrule
CPU & 48-core Intel Xeon E5-2666 v4 \\
GPU & NVIDIA GeForce RTX 4090 (24 GB) \\
RAM & 118.1 GB \\
Convergence Time (Complex Tasks) & 2-4 hours \\
Convergence Time (Simple Tasks) & 10-30 minutes \\
\bottomrule
\end{tabular}
\end{table}

\paragraph{Hyperparameters} In our algorithm design, we did not introduce a coefficient for the LLM heuristic term, eliminating the need for parameter adjustments for each task. Generally, the stronger the language model's ability to understand the environment, the greater the performance improvement. LLM-TD3 is built on top of TD3, and doesn't require parameter tuning.  In the baseline implementation, TD3's hyperparameters are also fixed for comparison. The hyperparameters of LLM-TD3 are detailed in Table \ref{tab:hyperparameters}.

\begin{table}[h]
\caption{Hyperparameters of LLM-TD3}
\label{tab:hyperparameters}
\centering
\begin{tabular}{ll}
\toprule
Hyperparameter & Value \\
\midrule
LLM Type & ChatGPT-Classic \\
Start Timesteps & 25,000 \\
Evaluation Frequency & 5,000 \\
Exploration Noise (Std) & 0.1 \\
Batch Size & 256 \\
Discount Factor $\gamma$ & 0.99 \\
Target Network Update Rate (Tau) & 0.005 \\
Policy Noise & 0.2 \\
Noise Clip & 0.5 \\
Policy Update Frequency & 2 \\
Hidden Layer Size & 512 (10,240 for Humanoid)  \\
\bottomrule
\end{tabular}
\end{table}

Since SAC only provides data for five Mujoco tasks, we used this data as the baseline and implemented SAC for the remaining tasks based on the description provided in \cite{sac}. The hyperparameters used for this implementation are detailed in Table \ref{tab:sac_hyper}. For the implementation of PPO in continuous action space, please refer to \href{https://docs.cleanrl.dev/rl-algorithms/ppo/}{https://docs.cleanrl.dev/rl-algorithms/ppo/}.

\begin{figure}[h] 
    \begin{minipage}[t]{0.48\linewidth}
        \centering
        \captionof{table}{Hyperparameters of SAC}
        \label{tab:sac_hyper}
        \begin{tabular}{ll}
        \toprule
        Hyperparameter & Value \\
        \midrule
        Critic Learning Rate & 3e-3 \\
        Actor Learning Rate & 3e-4 \\
        Entropy Target & $- \dim(\mathcal{A})$\\
        Policy Update Frequency & 1 \\
        Reward Scale & $\frac{1}{8}$, 1 \\
        Hidden Layer Size & 128  \\
        \bottomrule
        \end{tabular}
    \end{minipage}
    \hfill
    \begin{minipage}[t]{0.48\linewidth}
        \centering
        \captionof{table}{Convergence Line for Each Task}
        \label{tab:convergence}
        \begin{tabular}{ll}
        \hline
        Task & Convergence Line \\
        \hline
        Humanoid & 4000 \\
        Ant & 4400 \\
        Half-Cheetah & 9600  \\
        MountainCarContinuous & 1 \\
        Walker2D & 3200 \\
        Hopper & 2400 \\
        Pendulum & -300 \\
        InvertedPendulum & 800 \\
        \hline
        \end{tabular}
    \end{minipage}
\end{figure}

\paragraph{Performance and Deviation}
We present the performance of LLM-TD3 and baseline algorithms, including their standard deviations, in Figure \ref{fig:performances}. As mentioned in Section \ref{sec:exp}, an agent is considered to have converged when it reaches 80\% of peak performance. The convergence lines for each task are listed in Table \ref{tab:convergence}.

LLM-TD3 demonstrates state-of-the-art performance on 7 out of 8 tasks. Unlike baseline algorithms, which tend to perform poorly on some tasks, LLM-TD3 consistently achieves state-of-the-art results across all tasks without requiring hyperparameter adjustments. Its ability to maintain high performance uniformly across diverse tasks underscores LLM-TD3's robustness and generalizability. This robustness positions LLM-TD3 as a promising candidate for future applications in broader environments, such as web agents and autonomous driving systems.


\begin{figure}[h!]
    \centering

    \begin{subfigure}[b]{0.24\textwidth}
        \centering
        \includegraphics[width=\textwidth]{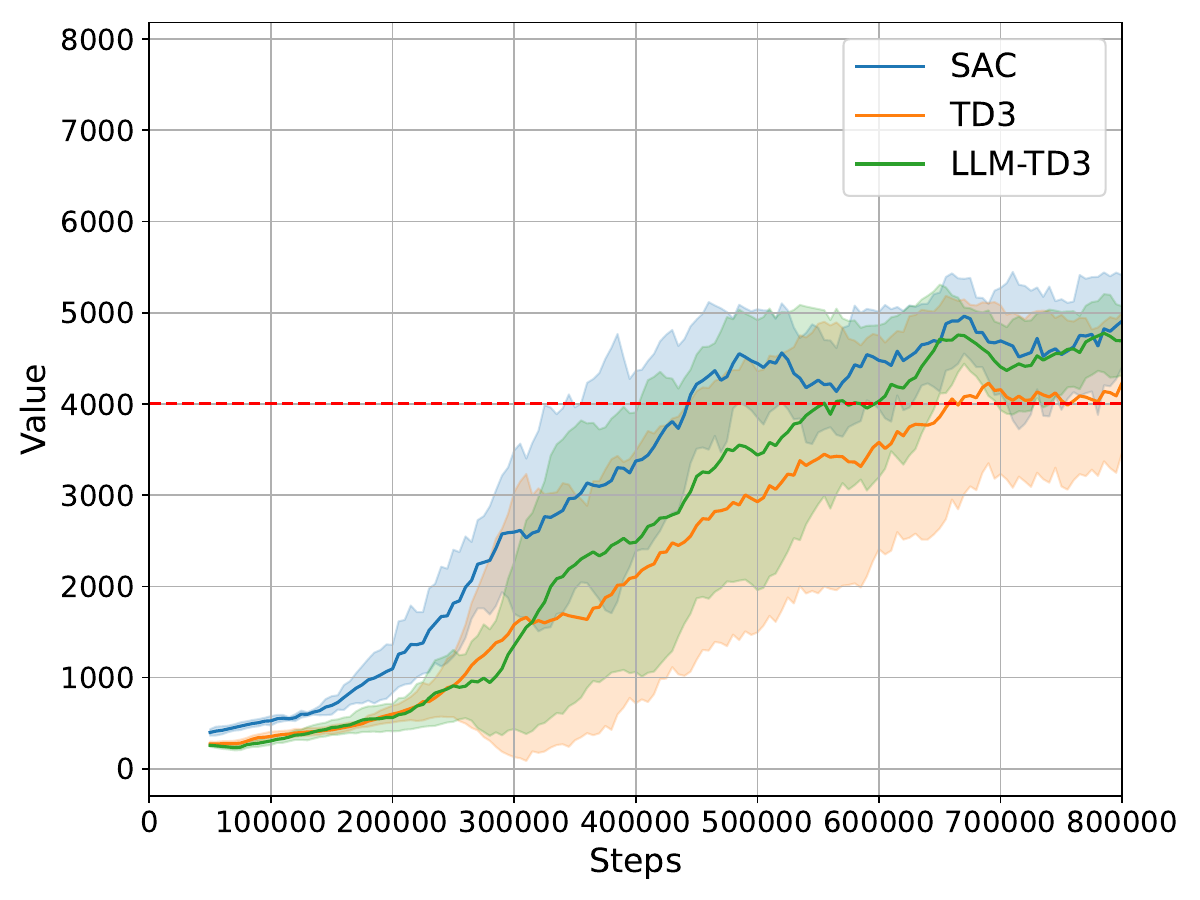}
        \caption{Humanoid}
        \label{fig:fig1}
    \end{subfigure}
    \hfill
    \begin{subfigure}[b]{0.24\textwidth}
        \centering
        \includegraphics[width=\textwidth]{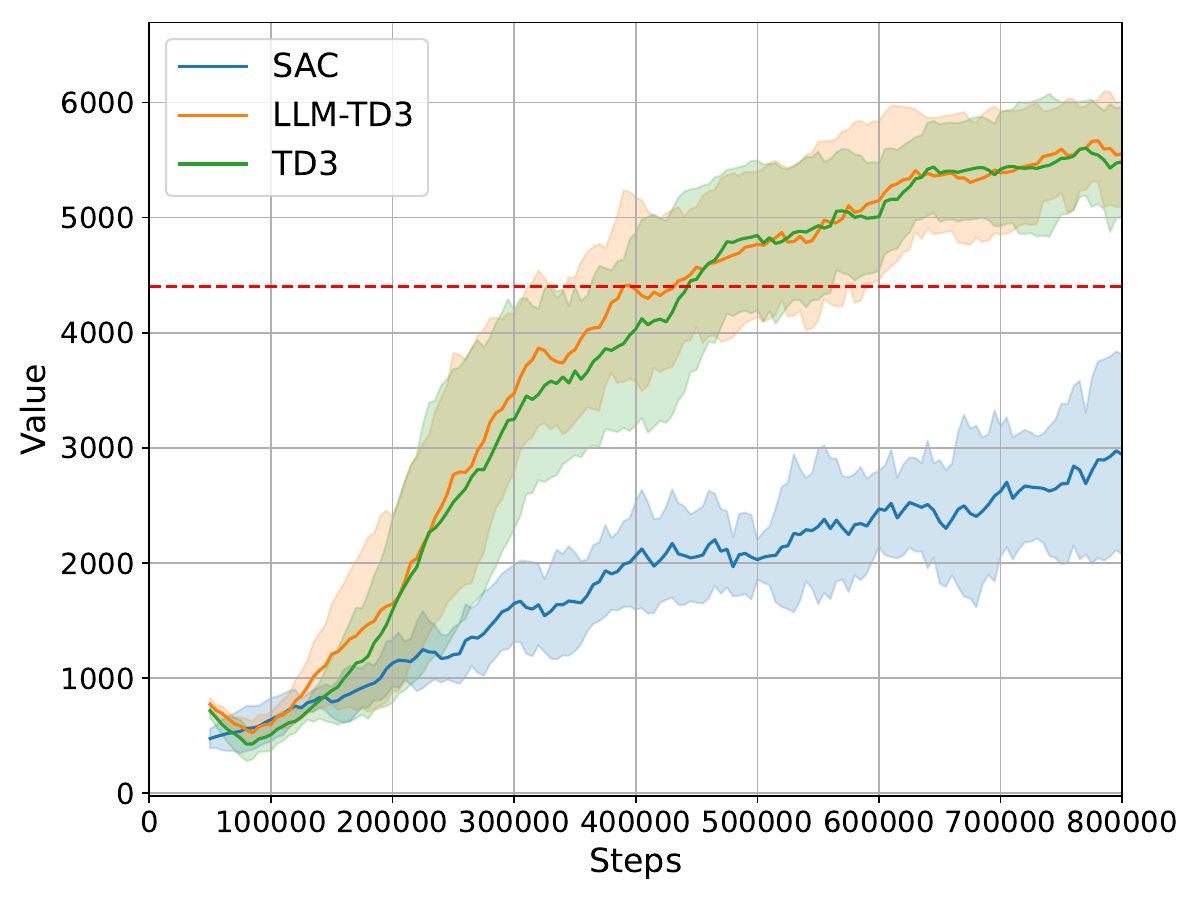}
        \caption{Ant}
        \label{fig:fig2}
    \end{subfigure}
    \hfill
    \begin{subfigure}[b]{0.24\textwidth}
        \centering
        \includegraphics[width=\textwidth]{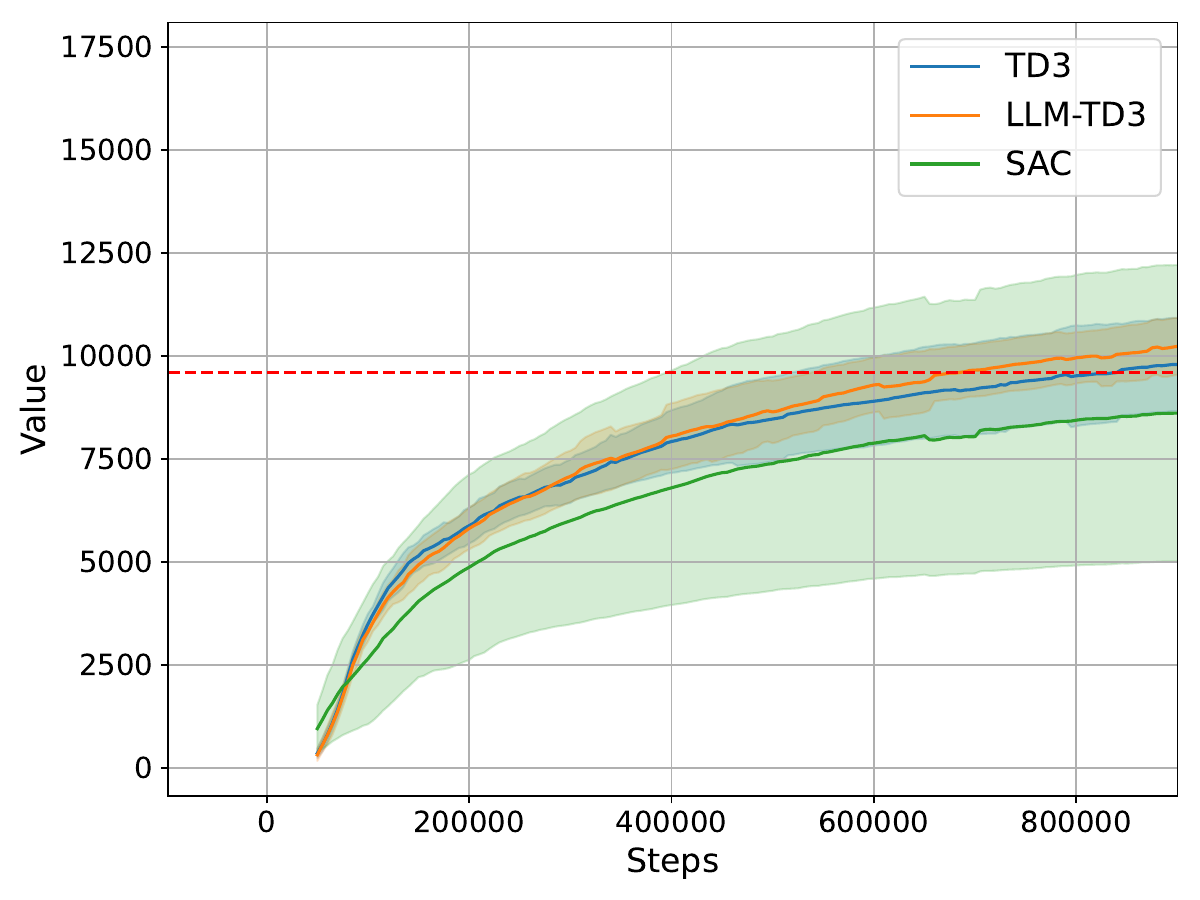}
        \caption{Half-Cheetah}
        \label{fig:fig3}
    \end{subfigure}
    \hfill
    \begin{subfigure}[b]{0.24\textwidth}
        \centering
        \includegraphics[width=\textwidth]{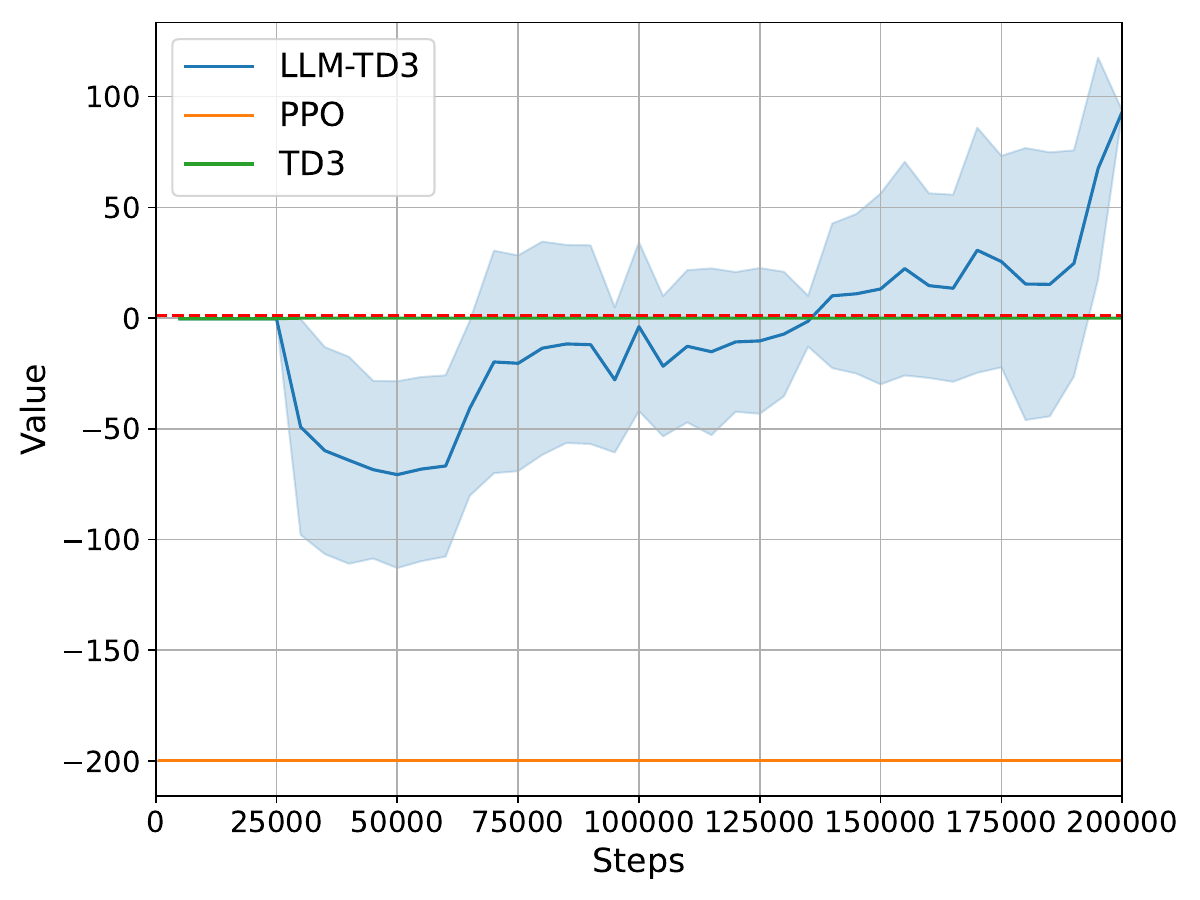}
        \caption{MountainCar}
        \label{fig:fig8}
    \end{subfigure}

    \vskip\baselineskip 

    \begin{subfigure}[b]{0.24\textwidth}
        \centering
        \includegraphics[width=\textwidth]{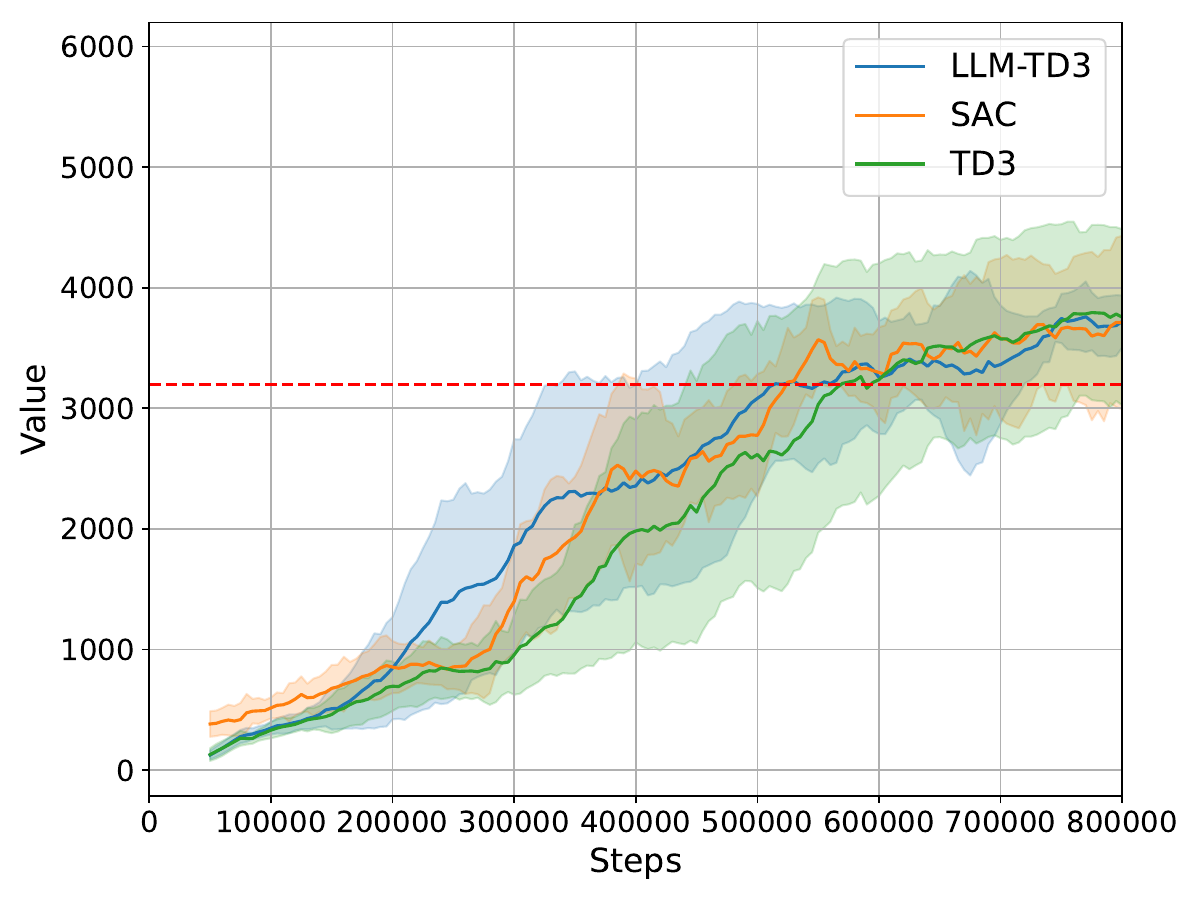}
        \caption{Walker2D}
        \label{fig:fig4}
    \end{subfigure}
    \hfill
    \begin{subfigure}[b]{0.24\textwidth}
        \centering
        \includegraphics[width=\textwidth]{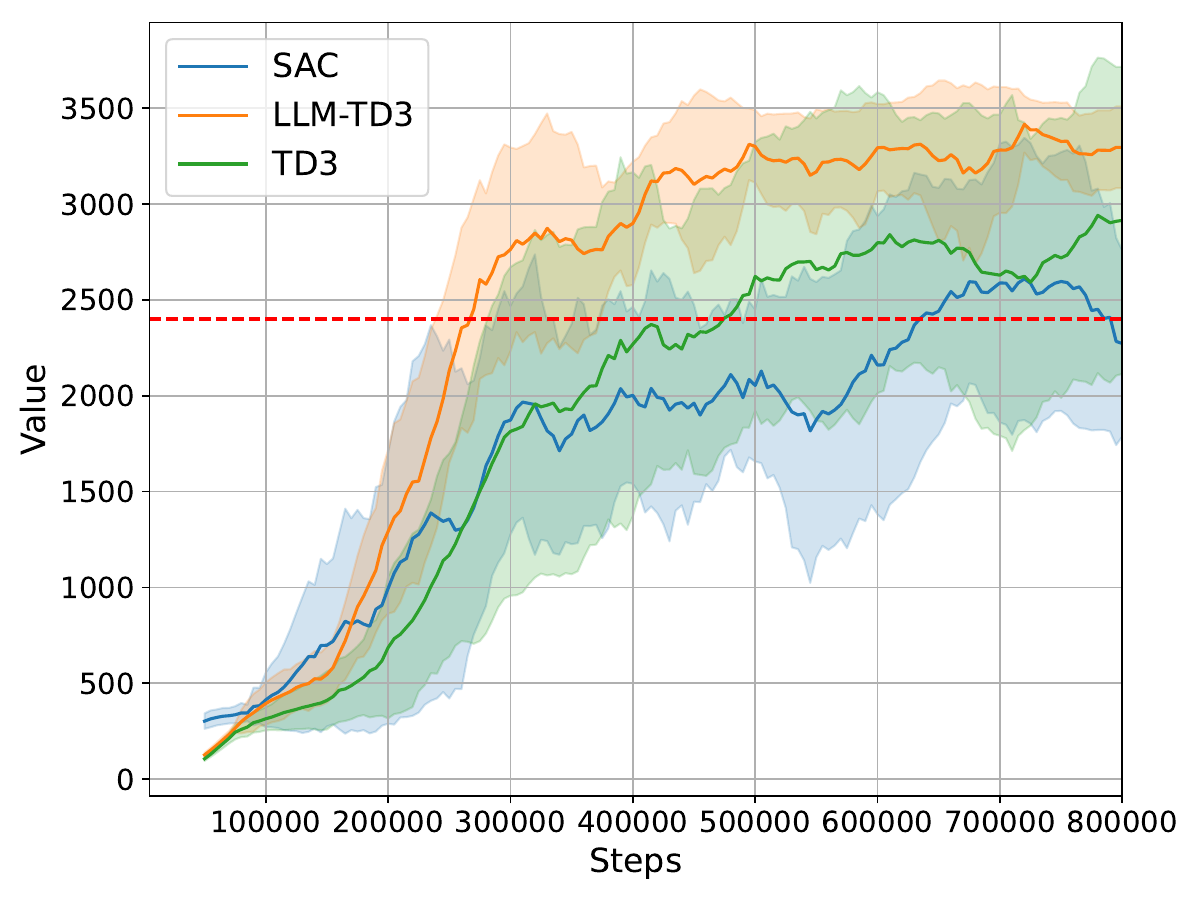}
        \caption{Hopper}
        \label{fig:fig5}
    \end{subfigure}
    \hfill
    \begin{subfigure}[b]{0.24\textwidth}
        \centering
        \includegraphics[width=\textwidth]{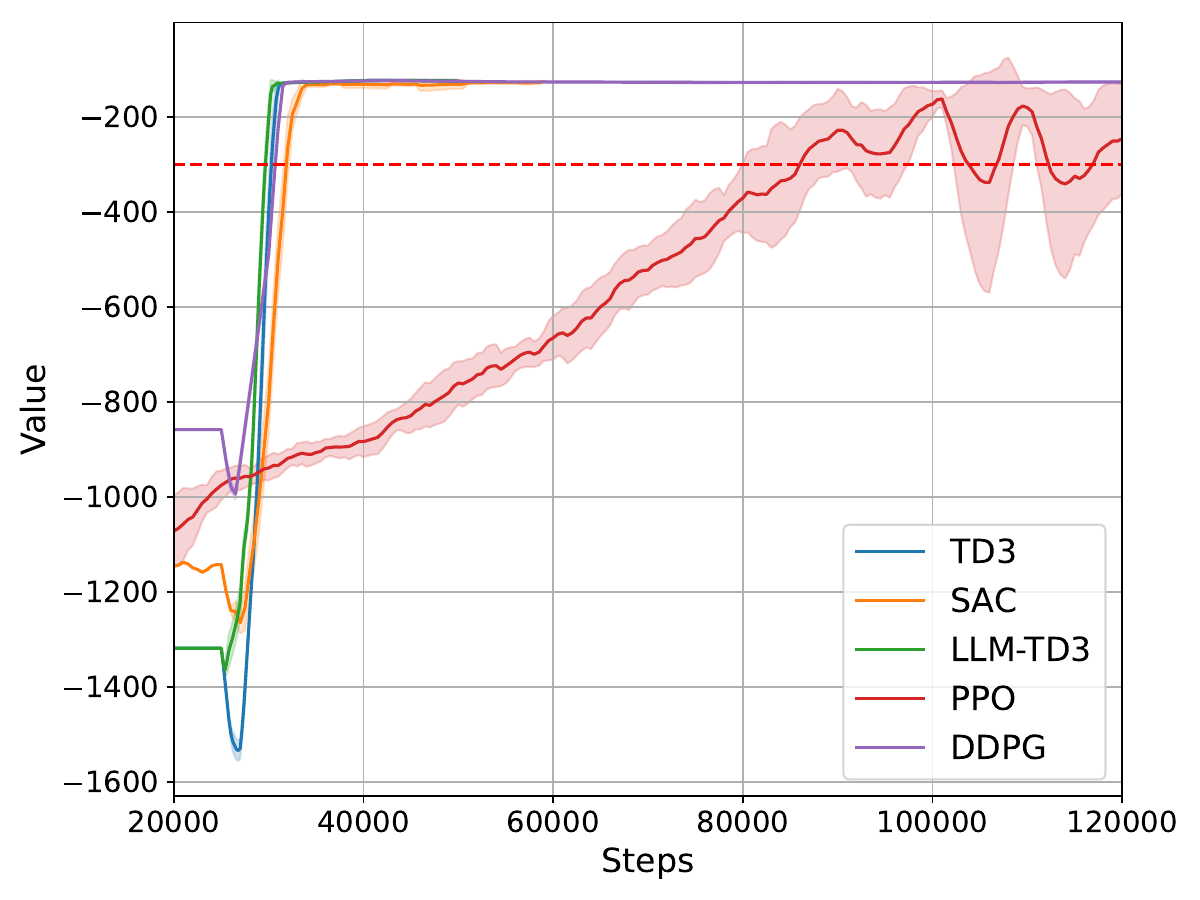}
        \caption{Pendulum}
        \label{fig:fig6}
    \end{subfigure}
    \hfill
    \begin{subfigure}[b]{0.24\textwidth}
        \centering
        \includegraphics[width=\textwidth]{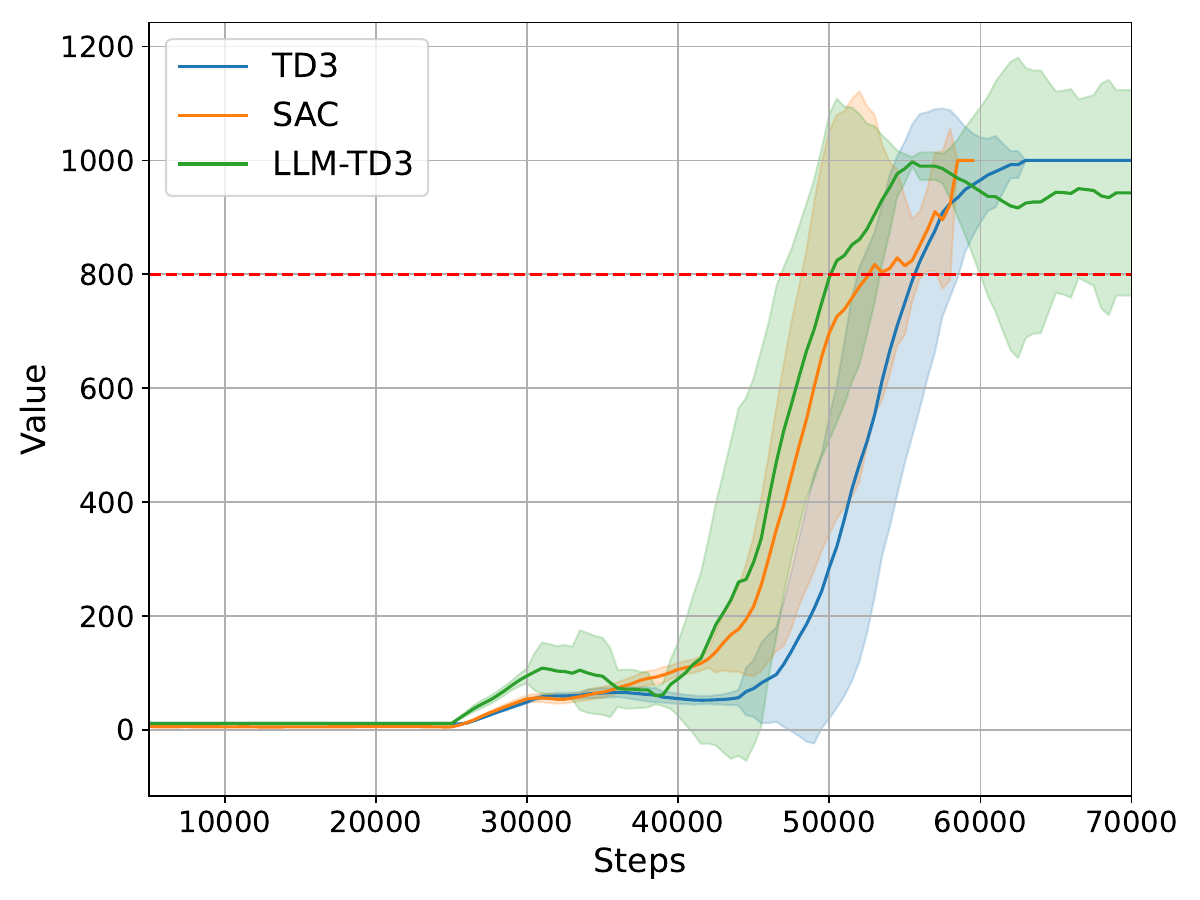}
        \caption{InvertedPendulum}
        \label{fig:fig7}
    \end{subfigure}

    \caption{Experimental results of different control tasks. Our proposed LLM-TD3 is able to converge quickly on different types of control tasks.}
    \label{fig:performances}
\end{figure}

\section{Discussion and Limitation}
This work integrates the advantages of generative models and reinforcement learning, addressing the limitations of both approaches. Its primary contribution is enabling agents to learn from imprecise data. 
By utilizing TD error-based Q-value iteration, the method transforms the cost induced by hallucination into a cost of exploration.
Specifically, it shifts the burden of creating precise, high-quality, target-aligned datasets to the cost of exploration, thus facilitating development in areas like imitation learning and generative model enhanced reinforcement learning. 
Moreover, when extensive exploration is allowed, agents can discover superior trajectories based on guidance trajectories, thereby achieving or even surpassing expert performance in many tasks more quickly.
Due to the limitations of current large language models (LLMs), we have not tested the framework in visual environments. Additionally, the quality of heuristic values provided by the LLM decreases as task complexity increases, and LLM-TD3 fails to surpass SAC in the humanoid task. One possible solution is to adopt more powerful language models or provide high-quality feedback to the LLM to enhance its guidance.

Theoretically, as long as the LLM can understand the environment, our framework can improve sample efficiency. We have only tested its capability on classic control tasks; future directions may include tasks such as web control and autonomous driving.

Furthermore, our training framework supports the development of autonomous agents, such as autonomous soccer robots \cite{abreu2023designing,deepmind_soccer_sr}. A sparse reward function can be implemented for these robots, offering rewards for goals and penalties for fouls. The provided guidance is then translated into heuristic Q-values through generative models to enhance Q-learning. Ultimately, the robots can identify the most effective goal-scoring strategies through limited interactions.

\end{document}